\documentclass{article} 
\usepackage{iclr2026_conference, times}

\usepackage{amsmath, amssymb, amsthm}
\usepackage{natbib}
\usepackage{hyperref}
\usepackage{url}
\usepackage{booktabs}
\usepackage{microtype}
\usepackage{xcolor}
\usepackage{graphicx}
\usepackage{multirow}
\usepackage{makecell}
\usepackage{wrapfig}
\usepackage{caption}

\newtheorem{theorem}{Theorem}[section]
\newtheorem{proposition}[theorem]{Proposition}

\newcommand{\R}{\mathbb{R}}
\newcommand{\Sp}{\mathbb{S}}

\newcommand{\cY}{\mathcal{Y}}
\newcommand{\cM}{\mathcal{M}}
\newcommand{\cD}{\mathcal{D}}
\newcommand{\SSW}{\mathrm{SSW}_2}
\newcommand{\SSWsq}{\mathrm{SSW}_2^2}

\newcommand{\calC}{C_\alpha}
\newcommand{\calCamb}{C^{\mathrm{amb}}_\alpha}
\newcommand{\taua}{\tau_\alpha}

\newcommand{\dM}{d_{\cM}}

\title{Manifold Constrained Conformal Prediction for Spatial Events}

\author{%
  Collin Nill \\
  Department of Statistics \\
  University of Connecticut\\
  Storrs, CT 06269 \\
  \texttt{collin.nill@uconn.edu} \
 \And
  Trevor A. Harris \\
  Department of Statistics \\
  University of Connecticut\\
  Storrs, CT 06269 \\
  \texttt{trevor.a.harris@uconn.edu} \
  \And
  Jason Adams \\
  Sandia National Laboratories\\
  Albuquerque, NM 87123 \\
  \texttt{jradams@sandia.gov} \
}

\usepackage{fancyhdr}
\fancypagestyle{plain}{
  \fancyhf{}
  \fancyhead[L]{\footnotesize Preprint}
  \fancyhead[R]{\footnotesize \today}
  
}
\iclrfinalcopy
\begin{document}
\maketitle

\begin{abstract}
We introduce a new conformal prediction method that constructs calibrated prediction sets over collections of spatial events, such as tropical cyclone genesis and earthquake locations. Forecasting natural hazards has become increasingly important, due to their significant economic impact, and quantifying the uncertainty of predictions is critical for accurate risk assessment. Our approach works by representing spatial point clouds as empirical measures so that we can score them using (sliced) Wasserstein distance, then constraining the resulting distribution-valued prediction set to be supported only near the training data manifold. We derive a coverage lower bound for the intersected sets and show that, in practice, this gap can be made small through a simple data-adaptive selection criterion. Because the resulting set is not analytically tractable, we introduce a modified flow-based sampling procedure, which allows us to represent and apply these prediction sets in practice as ensembles. Numerical experiments on synthetic data, tropical cyclone genesis, and earthquake occurrences show that our method achieves near-nominal coverage, with significantly lower energy distance and manifold distance than highest predictive density region (HDR) baselines along with generative model baselines.
\end{abstract}

\section{Introduction}
Natural disasters are among the most consequential and difficult-to-predict phenomena in Geoscience. Tropical cyclones, earthquakes, and wildfires collectively account for trillions of dollars in economic losses and hundreds of thousands of deaths each year \citep{merz2020impact, beven2018epistemic}. These hazards manifest as discrete, spatially organized configurations of events rather than as continuous fields. For example, the genesis locations of tropical cyclones in a given season, epicenters of earthquake sequences, ignition points of wildfire outbreaks. The number, positions, and spatial arrangement of these events are all meaningful, and the differences are particularly consequential for compound and extreme hazards, where dependence structure and spatial spread drive the largest risks \citep{davison2015statistics, zscheischler2018future, abbaszadeh2022perspective}. Uncertainty quantification for these phenomena cannot be reduced to scalar or low-dimensional summaries.

Modern forecasting systems are increasingly capable of operating at the level of event configurations. Climate models, stochastic simulators, and deep learning approaches all produce distributions over possible configurations, either through parameterized likelihood maps or through stochastic ensembles \citep{leutbecher2008ensemble, jordan2011operational, zhou2022neural, mukherjee2025neural}. Figure~\ref{fig:TEH} shows examples of such configurations for tropical cyclone genesis and earthquake events. None of these approaches are guaranteed to produce calibrated uncertainty, i.e. taking the 90th percentile of an ensemble does not automatically imply 90\% coverage of the true event cloud. The many interacting processes driving natural hazards make the data-generating mechanism effectively un-specifiable, so any model will carry residual miscalibration. What is required is a calibration procedure that operates post-hoc on any forecasting model and guarantees coverage regardless of its form.

\begin{figure}[t]
  \centering
  \includegraphics[width=\textwidth]{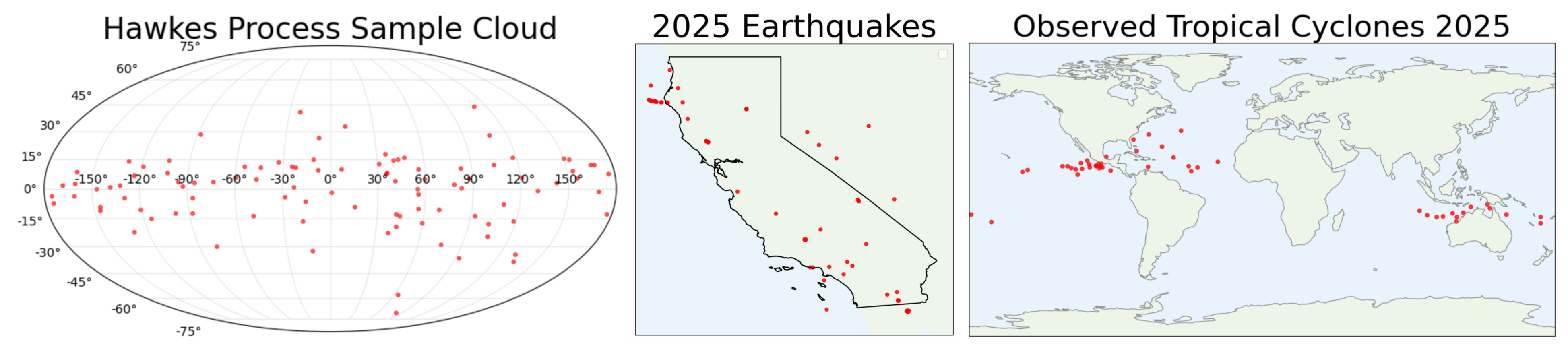}
  \caption{Examples of event clouds on the sphere. \textit{Right:} tropical cyclone genesis
  locations from one season \citep{gahtan2026international}, concentrated in tropical
  oceanic basins. \textit{Center:} significant earthquake epicenters from the NGDC/WDS
  database \citep{usgs_comcat_2026}, tracing the global fault network. \textit{Left:} a simulated
  calibration ensemble from a Hawkes process on the sphere (Section~\ref{apx:PP}).
  Each panel is a finite point cloud on $\Sp^2$ of variable cardinality.}
  \label{fig:TEH}
\end{figure}

Conformal prediction \citep{vovk2005algorithmic} provides exactly this guarantee under exchangeability \citep{shafer2008tutorial}. The prediction region is a sublevel set of a conformity score $S(y, \hat{y})$, and the coverage guarantee $\mathbb{P}(Y_{n+1} \in C_\alpha) \geq 1 - \alpha$ follows without distributional assumptions on the model or the data. The standard approach defines $S$ as a norm of the residual $r = y - \hat{y}$, which is not well-defined for point cloud predictions since clouds have variable cardinality, subtraction has no natural meaning, and scalar summaries such as mean location or total count can be identical for clouds with entirely different spatial organization.

We propose instead to carry out the comparison at the level of probability measures. Representing a point cloud $y$ as its empirical measure $\mu_y$ maps the conformity scoring problem onto distributional comparison, for which optimal transport provides the appropriate geometry \citep{villani2009optimal, peyre2019computational}. A conformity score based on $W_2$ produces prediction regions that are metric balls in $(P_2(\Sp^2), W_2)$, i.e. the set of all distributions within transport cost $\tau_\alpha$ of the predicted measure. The spherical sliced Wasserstein distance \citep{bonet2023spherical} provides a tractable approximation at $O(n \log n)$ cost. The idea of using OT distances as conformity scores has been explored concurrently for multivariate Euclidean outputs \citep{thurin2025optimal, klein2025multivariate} and graph-valued outputs \citep{melo2025conformal}; the present work addresses the space of probability measures on $\Sp^2$, where spherical geometry and variable cardinality require different treatment.

Spatially adaptive conformal methods \citep{tibshirani2019conformal,
barber2023conformal} conformalize scalar residuals at individual locations and
do not extend to the setting where the entire time interval point cloud
configuration which is the prediction target. Bayesian spatial point process
posteriors \citep{baddeley2015spatial} provide an alternative source of
uncertainty sets but require specifying a parametric generative model and
do not offer post-hoc coverage guarantees for an arbitrary forecasting
system. Within the conformal literature, the concurrent OT-based methods of
\citep{thurin2025optimal} and \citep{klein2025multivariate} operate on
multivariate Euclidean outputs and do not accommodate the geodesic geometry
of $\Sp^2$ or variable-cardinality clouds; \citep{melo2025conformal} use
Gromov--Wasserstein distance for graphs, a structurally different object
from empirical measures on the sphere. Consequently, the natural comparison
class for our method is HDR style region estimators applied to the forecast
probability grid, which represent the current practical standard for spatial
hazard uncertainty quantification.

Prediction sets defined as balls in $P_2(\Sp^2)$ are, however, far too vast to be useful \citep{villani2009optimal}. Without additional structure they contain physically implausible configurations, and unconstrained gradient flows in this space will generate off-manifold samples even when the target distance is correct \citep{ambrosio2005gradient}. Building on the flow-based conformal predictive distribution framework of \citep{harris2026flow}, we introduce a constrained gradient flow that simultaneously targets the SSW prediction set boundary and the data manifold, producing an ensemble of boundary configurations that are both calibrated and geophysically coherent. Experiments show that the resulting regions achieve nominal coverage, score well under proper distributional scoring rules, and concentrate on physically plausible configurations in a way the unconstrained ambient ball does not.

\section{Background}
 
\subsection{Conformal Prediction}
 
Conformal prediction \citep{vovk2005algorithmic} constructs prediction regions with finite-sample coverage guarantees under exchangeability. Given calibration pairs $\{(X_i, Y_i)\}_{i=1}^n$, a conformity score $S(Y, f(X))$ measures the discrepancy between the observed target and the model prediction. The calibration scores $S_i = S(Y_i, f(X_i))$ define the empirical $(1-\alpha)$-quantile
\begin{equation}
  \taua = S_{(\lceil (n+1)(1-\alpha) \rceil)},
\end{equation}
and the split conformal prediction region for a new test covariate $X_{n+1}$ is
\begin{equation}
  C_\alpha(X_{n+1}) = \{Y \in \mathcal{Y} : S(Y, f(X_{n+1})) \leq \taua\}.
\end{equation}
Under exchangeability, $\mathbb{P}(Y_{n+1} \in C_\alpha(X_{n+1})) \geq 1 - \alpha$. The framework extends to covariate shift \citep{tibshirani2019conformal}, distribution drift \citep{barber2023conformal}, functional data \citep{lei2015conformal, diquigiovanni2022conformal}, and flow-based predictive distributions \citep{harris2026flow}. The choice of conformity score determines the shape of the prediction region and how well it adapts to the problem geometry.
 
\subsection{Wasserstein Distances on the Sphere}
\label{sec:background_wass}
 
Let $P_2(\mathcal{Z})$ denote the space of Borel probability measures on a metric space $\mathcal{Z}$ with finite second moments. Optimal transport defines a geometry on $P_2(\mathcal{Z})$ by measuring the minimum cost of rearranging the mass of one measure to match another \citep{villani2009optimal}. For measures $\mu, \nu \in P_2(\mathcal{Z})$, the $p$-Wasserstein distance is
\begin{equation}
  W_p(\mu, \nu) = \left(\inf_{\pi \in \Pi(\mu,\nu)} \int_{\mathcal{Z} \times \mathcal{Z}}
  d(x,y)^p \, d\pi(x,y)\right)^{1/p},
\end{equation}
where $\Pi(\mu,\nu)$ is the set of couplings with marginals $\mu$ and $\nu$ and $d$ is the metric on $\mathcal{Z}$. For $\mathcal{Z} = \Sp^2$ the cost $d(x,y)$ is the geodesic (great-circle) distance. The space $(P_2(\mathcal{Z}), W_p)$ is itself a metric space, and $W_p(\mu_n, \mu) \to 0$ if and only if $\mu_n \to \mu$ weakly and the $p$-th moments converge \citep{villani2009optimal}. A conformity score based on $W_2$ therefore defines prediction regions that are metric balls in $(P_2(\Sp^2), W_2)$: the set of all distributions within transport cost $\taua$ of the predicted measure.
 
Computing $W_p$ exactly requires solving a linear program cubic in the number of support points. The sliced Wasserstein distance \citep{bonneel2015sliced} replaces this with a Monte Carlo average over one-dimensional projections,
\begin{equation}
  \mathrm{SW}_p^p(\mu,\nu) = \int_{\Sp^{d-1}} W_p^p(P_{\theta\,\#} \mu,\, P_{\theta\,\#}
  \nu)\, d\theta,
\end{equation}
where $P_\theta$ denotes projection onto direction $\theta$ and each one-dimensional $W_p$ is computed in $O(n \log n)$ by sorting. The integral is approximated by drawing directions $\theta$ uniformly from $\Sp^{d-1}$. The spherical sliced Wasserstein distance \citep{bonet2023spherical} adapts this construction to $\Sp^2$ by projecting onto great circles rather than Euclidean lines,
\begin{equation}
  \SSWsq(\mu,\nu) = \int_{V_{d,2}} W_2^2(P^U_{\,\#} \mu,\, P^U_{\,\#} \nu)\, d\sigma(U),
\end{equation}
where $V_{d,2}$ is the Stiefel manifold of orthonormal two-frames $(u_1, u_2) \in \R^{d \times 2}$ defining a great circle, $P^U$ maps each point $x \in \Sp^2$ to its coordinates in the plane spanned by $(u_1, u_2)$, and $\sigma$ is the uniform measure on $V_{d,2}$. In practice the integral is approximated by sampling frames uniformly. SSW inherits the metric properties of the sliced construction while respecting the curvature of $\Sp^2$. Throughout this paper, $\SSW(\mu,\nu)$ denotes the distance (the positive square root of $\SSWsq(\mu,\nu)$), and we use $\SSWsq$ only where the squared form appears explicitly.

\section{Method}
 
We now develop the conformal framework for point cloud predictions. We begin by defining the space of event clouds and the SSW conformity score, establish the ambient conformal region and its coverage guarantee, introduce an empirical surrogate for the data manifold, and then describe the constrained flow that samples the boundary of the manifold-restricted region.
 
\subsection{Ambient Conformal Region}
\label{sec:ambient}
 
Let $\mathcal{Y} = \bigcup_{m \geq 1} (\Sp^2)^m / \mathfrak{S}_m$ denote the space of finite unordered point clouds on $\Sp^2$, where $\mathfrak{S}_m$ is the symmetric group acting by permutation of coordinates. Each $Y \in \mathcal{Y}$ is a finite multiset of points on the sphere whose cardinality $|Y|$ may vary across observations. We represent each cloud by its empirical measure
\begin{equation}
    \mu_Y = \frac{1}{|Y|} \sum_{y \in Y} \delta_y \in P_2(\Sp^2),
\end{equation}
which embeds $\mathcal{Y}$ into the Wasserstein space $P_2(\Sp^2)$ introduced in Section~\ref{sec:background_wass}. This embedding is what makes the comparison of clouds of different cardinalities well-posed. Rather than requiring a point-to-point correspondence, we compare the distributions of mass that the two clouds induce on $\Sp^2$.
 
Let $\{(X_i, Y_i)\}_{i=1}^n$ be exchangeable calibration pairs, where $X_i$ is the covariate, $Y_i \in \mathcal{Y}$ is the observed target cloud, and $f(X_i)$ is the model prediction. We define the conformity score as the SSW distance between the empirical measure of the observed target and the model prediction,
\begin{equation}
  S(Y, f(X)) = \SSW(\mu_Y, f(X)),
\end{equation}
computed with a fixed set of slicing frames drawn once and shared across all evaluations. Since $\SSW$ metrizes the same weak convergence topology on $P_2(\Sp^2)$ as $W_2$ \citep{bonet2023spherical}, $S(Y, f(X)) = 0$ if and only if $\mu_Y = f(X)$, and $S$ is large precisely when the two empirical measures are far apart in the Wasserstein sense, accounting for differences in the spatial arrangement, clustering, and effective support of the two clouds.
 
The calibration scores $S_i = S(Y_i, f(X_i))$ yield the conformal cutoff
\begin{equation}
    \taua = S_{(\lceil(n+1)(1-\alpha)\rceil)},
\end{equation}
and the ambient conformal prediction region for a new test covariate $X_{n+1}$ is
\begin{equation}
  \calCamb(X_{n+1}) = \bigl\{Y \in \mathcal{Y} : S(Y, f(X_{n+1})) \leq \taua\bigr\}.
  \label{eq:ambient_region}
\end{equation}
Viewed in $P_2(\Sp^2)$, this is the preimage under $Y \mapsto \mu_Y$ of the SSW ball of radius $\taua$ centered on $f(X_{n+1})$. Under exchangeability of the calibration and test pairs,
\begin{equation}
    \mathbb{P}\bigl(Y_{n+1} \in \calCamb(X_{n+1})\bigr) \geq 1 - \alpha.
\end{equation}
The scores $S_1, \ldots, S_n, S_{n+1}$ are exchangeable since the pairs $(X_i, Y_i)$ are exchangeable. By the standard split conformal argument \citep{vovk2005algorithmic}, the empirical $(1-\alpha)$-quantile $\taua$ of $S_1, \ldots, S_n$ satisfies $\mathbb{P}(S_{n+1} \leq \taua) \geq 1 - \alpha$, which is exactly the event $Y_{n+1} \in \calCamb(X_{n+1})$.
 
\subsection{Manifold Restriction}
\label{sec:manifold}
 
The ambient region $\calCamb(X_{n+1})$ is an SSW ball in $P_2(\Sp^2)$ and enjoys the standard split conformal coverage guarantee. It is, however, far larger than the support of physically realistic event clouds. As a preimage in $\mathcal{Y} = \bigcup_{m \geq 1} (\Sp^2)^m / \mathfrak{S}_m$, it contains clouds of arbitrary cardinality and spatial arrangement: configurations scattered uniformly over the globe, clouds with hundreds of events where observations have dozens, and clouds with modes at dynamically implausible locations. All of these are statistically admissible because they lie within SSW distance $\taua$ of the prediction, but they carry no physical content and make the prediction region uninterpretable for downstream use.
 
The data-generating process concentrates on a much smaller subset. Physical dynamics constrain event clouds to structures such as tectonic fault networks for earthquakes or climatological basin boundaries for tropical cyclones. We denote the support of the data-generating distribution by the compact manifold $\cM \subset \mathcal{Y}$ and define the manifold-restricted conformal region
\begin{equation}
  \calC^{\textrm{man}}(X_{n+1}) = \calCamb(X_{n+1}) \cap \cM.
  \label{eq:manifold_region}
\end{equation}
Since $Y_{n+1} \in \cM$ almost surely by assumption, the events $\{Y_{n+1} \in \calCamb(X_{n+1})\}$ and $\{Y_{n+1} \in \calC^{\textrm{man}}(X_{n+1})\}$ coincide almost surely, so the coverage guarantee of the ambient region holds without modification, i.e. $\mathbb{P}(Y_{n+1} \in \calC^{\textrm{man}}((X_{n+1})) \geq 1 - \alpha$.
 
In practice $\cM$ is unknown, so we approximate membership in $\cM$ via proximity to the training targets $\mathcal{D}_0 = \{Y_i^0\}_{i=1}^{n_0}$. Specifically, for any candidate cloud $Y$ we define the manifold discrepancy
\begin{equation}
  \dM(Y) = \min_{Y' \in \cD_0} \SSW(\mu_Y, \mu_{Y'}).
  \label{eq:manifold_discrepancy}
\end{equation}
This is the SSW distance from $\mu_Y$ to the nearest element of the training set in $P_2(\Sp^2)$. When $\dM(Y)$ is small, $Y$ is distributionally close to an observed configuration; when it is large, $Y$ occupies a region of $\mathcal{Y}$ far by the training data. We impose membership in the empirical surrogate of $\cM$ through the tube constraint $\dM(Y) \leq \varepsilon$ for a tolerance $\varepsilon > 0$. The empirical manifold-restricted prediction set becomes
\begin{equation}
  \calC(X_{n+1}) = \bigl\{Y \in \mathcal{Y} : S(Y, f(X_{n+1})) \leq \taua,\;
  \dM(Y) \leq \varepsilon \bigr\}.
  \label{eq:empirical_manifold_region}
\end{equation}
 
The coverage of $\calC(X_{n+1})$ depends on whether $Y_{n+1}$ falls inside the empirical tube $\{\dM(Y) \leq \varepsilon\}$. We define $\dM$ using the training targets $\mathcal{D}_0 = \{Y_i^0\}_{i=1}^{n_0}$, where $\{(X_i^0, Y_i^0)\}_{i=1}^{n_0}$ are the training pairs used to fit $f$, which are independent of the calibration set
$\{(X_i, Y_i)\}_{i=1}^n$ by assumption. Concretely,
\begin{equation}
    \dM(Y) = \min_{1 \leq i \leq n_0} \SSW(\mu_Y, \mu_{Y_i^0}).
\end{equation}
Since $\mathcal{D}_0$ is independent of $\{(X_i, Y_i)\}_{i=1}^n$, the manifold discrepancy $\dM(Y_{n+1})$ is independent of the calibration scores $S_1, \ldots, S_n$, and no splitting of the calibration data is required. However, because $\{\dM(Y) \leq \varepsilon\}$ is an empirical quantity and not the exact manifold $\cM$, coverage is not automatic. Proposition \ref{prop:manifold} bounds the coverage gap.
 \begin{proposition}[Manifold Coverage Lower Bound]
\label{prop:manifold}
Assume $Y_1^0,\dots,Y_{n_0}^0,Y_{n+1}$ are i.i.d.\ from a distribution supported on a bounded $\cM \subset \cY$, and that $\cD_0=\{Y_i^0\}_{i=1}^{n_0}$ is independent of the calibration sample $\{(X_i,Y_i)\}_{i=1}^n$. Let $A_1,\dots,A_{N_\varepsilon}$ be a finite measurable cover of $\cM$ such that
$$
\sup_{Y,Y' \in A_k} \SSW(\mu_Y,\mu_{Y'}) \le \varepsilon,
\qquad k=1,\dots,N_\varepsilon,
$$
and define $\beta_\varepsilon := \min_{1\le k\le N_\varepsilon} \mathbb{P}(Y \in A_k)$.
Then
$$
\mathbb{P}\bigl(Y_{n+1} \in \calC(X_{n+1})\bigr)
\ge
1-\alpha - N_\varepsilon(1-\beta_\varepsilon)^{n_0}.
$$
\end{proposition}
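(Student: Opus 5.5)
The plan is a union bound that splits the miscoverage event into ambient failure and tube failure:
\[
\mathbb{P}\bigl(Y_{n+1}\notin \calC(X_{n+1})\bigr)\le \mathbb{P}\bigl(Y_{n+1}\notin\calCamb(X_{n+1})\bigr)+\mathbb{P}\bigl(\dM(Y_{n+1})>\varepsilon\bigr).
\]
The first term is at most $\alpha$ by the split conformal guarantee already established for the ambient region in Section~\ref{sec:ambient}. This step uses only exchangeability of the calibration and test pairs, and the independence of $\cD_0$ means conditioning on the training set does not disturb it. The work therefore reduces to showing $\mathbb{P}(\dM(Y_{n+1})>\varepsilon)\le N_\varepsilon(1-\beta_\varepsilon)^{n_0}$.

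For the second term I would first define a ``good'' event
\[
G=\bigcap_{k=1}^{N_\varepsilon}\{\exists\, i\le n_0 : Y_i^0\in A_k\},
\]
meaning every cell of the cover contains at least one training target.

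On $G$, take $Y_{n+1}\in\cM$, which holds almost surely. Since the $A_k$ cover $\cM$, there is a $k$ with $Y_{n+1}\in A_k$. Choose a training cloud $Y_i^0\in A_k$. The diameter condition then gives $\SSW(\mu_{Y_{n+1}},\mu_{Y_i^0})\le\varepsilon$, so $\dM(Y_{n+1})\le\varepsilon$. Hence $\{\dM(Y_{n+1})>\varepsilon\}\subseteq G^c$ up to a null set.

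Then bound $\mathbb{P}(G^c)$ by a union bound over cells. Because the $Y_i^0$ are i.i.d., $\mathbb{P}(\text{no } Y_i^0\in A_k)=(1-\mathbb{P}(Y\in A_k))^{n_0}$. This is at most $(1-\beta_\varepsilon)^{n_0}$ by the definition of $\beta_\varepsilon$ as the minimum cell probability. Summing over the $N_\varepsilon$ cells yields the claimed term. Note that $G$ depends only on $\cD_0$, not on $Y_{n+1}$, so no conditioning subtlety arises here.

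The step I expect to need the most care is the union bound in the first display, specifically confirming that the two failure events are correctly combined. The conformal term is a marginal statement over calibration and test data, while the tube term is over training and test data. One could also condition on $\cD_0$ and bound the first term conditionally, since independence of $\cD_0$ from the calibration and test pairs preserves exchangeability of the scores given $\cD_0$; the unconditional union bound should suffice either way. A minor point is measurability of $\dM$, which I would handle by noting that $\SSW$ is continuous and the minimum is over finitely many terms. Combining the pieces gives coverage at least $1-\alpha-N_\varepsilon(1-\beta_\varepsilon)^{n_0}$.
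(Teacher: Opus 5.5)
Your proof is correct and has the same skeleton as the paper's. Both use the union bound into ambient failure plus tube failure, the ambient split conformal guarantee for the first term, and a covering argument with the i.i.d.\ assumption for the second. The difference is in how you decompose the tube-failure event $\{\dM(Y_{n+1})>\varepsilon\}$.

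The paper splits it according to the cell containing $Y_{n+1}$, bounding it by $\sum_k \mathbb{P}(Y_{n+1}\in A_k,\ Y_i^0\notin A_k\ \forall i)$. It then uses independence of $Y_{n+1}$ from $\cD_0$ to get $\sum_k p_k(1-p_k)^{n_0}$ with $p_k=\mathbb{P}(Y\in A_k)$. Finally it relaxes this to $N_\varepsilon(1-\beta_\varepsilon)^{n_0}$ via $\sum_k p_k\le N_\varepsilon$.

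You instead pass to the event $G$ that every cell is hit by $\cD_0$, which depends only on the training targets, and obtain $\sum_k (1-p_k)^{n_0}$. Your route needs slightly less: only $Y_{n+1}\in\cM$ almost surely, not independence of $Y_{n+1}$ from $\cD_0$. It also proves something uniform: on $G$ the whole of $\cM$ lies inside the empirical tube, so the tube part of the guarantee holds simultaneously for any number of test clouds.

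The paper's route keeps the weights $p_k$ and so has a sharper intermediate bound. For a partition, $\sum_k p_k=1$, so the factor $N_\varepsilon$ can be dropped to give $(1-\beta_\varepsilon)^{n_0}$. Your bound cannot do this; with all $p_k=1/N_\varepsilon$ it stays at $N_\varepsilon(1-1/N_\varepsilon)^{n_0}$. Both routes end at the stated inequality.
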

Proof in Appendix \ref{apx:proof_manifold}. This bound isolates the cost of enforcing an approximate manifold constraint. Compared to the usual split conformal guarantee, there is an additional failure probability $N_\varepsilon(1-\beta_\varepsilon)^{n_0}$, which is the chance that there are no training samples in the $\varepsilon$-scale region of $\mathcal M$ containing the test response. However, if $\mathcal D_0$ forms an adequate $\varepsilon$-resolution representation of $\mathcal M$, then coverage approaches $1-\alpha$ exponentially fast in the training size $n_0$. 

Proposition \ref{prop:manifold} also makes the role of $\varepsilon$ explicit in that a larger $\varepsilon$ reduces $N_\varepsilon$ and increases $\beta_\varepsilon$, both of which shrink the coverage gap towards $1 - \alpha$. However increasing $\varepsilon$ also expands $\calC$ toward the ambient ball $\calCamb$, which may not accurately represent $\mathcal{M}$. In practice, we just set $\varepsilon$ large enough to cover all training data which was found to work reasonably well, albeit conservatively (Section \ref{sec:coverage}). 
 
 
\subsection{Boundary Construction by Constrained Flow}
\label{sec:flow}
 
\citep{harris2026flow} introduces a flow-based construction for sampling the boundary of conformal prediction regions. Candidate distributions are evolved under a gradient ODE that drives the conformity score to the calibration quantile $\taua$. We extend this framework to incorporate the manifold constraint $\dM(Y) \leq \varepsilon$, yielding a flow that converges to the boundary of the empirical manifold-restricted region $\calC(X_{n+1})$ defined in equation~\eqref{eq:empirical_manifold_region}.
 
Let $Y(t) \in \mathcal{Y}$ be a time-indexed candidate cloud and let $y(t) \in \R^{3|Y|}$ denote the vectorization of its point coordinates. Define the two scalar quantities
\begin{equation}
    S(y) = S(Y, f(X_{n+1})) = \SSW(\mu_Y, f(X_{n+1})), \qquad d(y) = \dM(Y),
\end{equation}
with gradients $g_S(y) = \nabla_y S$ and $g_d(y) = \nabla_y d$ obtained by automatic differentiation through the SSW computation. We seek a velocity field $v(y) = \dot{y}$ such that the trajectory $Y(t)$ converges to the set $\{S = \taua,\, d \leq \varepsilon\}$.
We specify this by imposing target scalar dynamics on both quantities:
\begin{align}
  \frac{d}{dt} S(Y(t)) &= -\lambda_1\bigl(S(Y(t)) - \taua\bigr),
  \label{eq:dyn_S}\\
  \frac{d}{dt} \dM(Y(t)) &= -\lambda_2\bigl(\dM(Y(t)) - \varepsilon\bigr)_+,
  \label{eq:dyn_d}
\end{align}
where $\lambda_1, \lambda_2 > 0$ are decay rates and $(u)_+ = \max\{u, 0\}$. The first equation drives $S$ exponentially toward $\taua$ from any initial value; at steady state $S(Y) = \taua$, placing the cloud on the conformal boundary. The second equation drives $d$ toward $\varepsilon$ only when $d > \varepsilon$, acting as a one-sided restoring force that enforces the manifold tube without disturbing clouds already inside it. At steady state, $Y$ lies on the conformal shell and within the manifold tube, i.e. on the boundary of $\calC(X_{n+1})$.
 
By the chain rule, equations~\eqref{eq:dyn_S}--\eqref{eq:dyn_d} impose two linear constraints on $v(y)$:
\begin{equation}
  J(y)\, v(y) = -c(y), \qquad
  J(y) = \begin{pmatrix} g_S(y)^\top \\ g_d(y)^\top \end{pmatrix} \in \R^{2 \times 3|Y|},
  \quad
  c(y) = \begin{pmatrix} \lambda_1\bigl(S(Y) - \taua\bigr) \\
    \lambda_2\bigl(\dM(Y) - \varepsilon\bigr)_+ \end{pmatrix}.
  \label{eq:linear_constraint}
\end{equation}
Since $J$ has two rows and $3|Y|$ columns, the system $Jv = -c$ is underdetermined for any cloud with $|Y| \geq 1$. Among the infinitely many solutions we take the minimum norm solution, given by the Moore--Penrose pseudoinverse:
\begin{equation}
  v(y) =-J(y)^\top\{J(y)J(y)^\top\}^{+}c(y).
  \label{eq:velocity}
\end{equation}
Derivation in Appendix \ref{apx:velocity_field}.
The minimum-norm choice is natural here since it satisfies the two prescribed scalar dynamics while introducing no additional motion in the directions orthogonal to both $g_S$ and $g_d$, minimizing numerical drift away from the constraint surface.

The flow is integrated by forward Euler steps. After each step, every point in $Y$ is projected back onto $\Sp^2$ by normalization to ensure the trajectory remains on the sphere. The spherical projection introduces an error of order $\delta^2$ in the constraint residual at each step, which is corrected by the restoring dynamics at the next iteration.

The velocity field in Equation \ref{eq:velocity} allows us to sample along the boundary of a single $\alpha \in (0, 1)$ conformal prediction set. Using the same technique as in \citep{harris2026flow}, we can marginalize over a sequence of these samplers to produce a calibrated predictive distribution. 
\begin{equation} \label{eqn:conformal_distribution}
    P_x^{\mathrm{CPD}}(A) = \int_0^1 \nu_{x, \alpha}(A)d\alpha, \qquad A \in \mathcal{B}(\mathcal{Y}),
\end{equation}
where each $\nu_{x, \alpha}$ is the pushforward distribution of a base measure $\mu_x$ through the flow induced by the velocity field (Equation \ref{eq:velocity}). We take the $\mu_x$ to be the empirical distribution of the calibration data.

 
 

\section{Numerical Studies} \label{sec:numerical_studies}


We evaluate the coverage (Prop. \ref{prop:manifold}) and forecast skill (Equation \ref{eqn:conformal_distribution}) of our proposed approach on synthetic processes (Sect. \ref{sec:coverage}, \ref{sec:synthetic_data}) and real geophysical data (Sect. \ref{sec:tropical_cyclones}, \ref{sec:earthquakes}).

Synthetic data is generated from four different point processes: a homogeneous Poisson process (HPP), a misspecified spiral point process (Spiral), a three mode inhomogeneous Poisson processes (3-IHPP), and a Hawkes process. These processes represent increasing levels of spatial heterogeneity and dependence. For each process we generate 100 sample clouds for training and calibration data, and 50 clouds of test data. We use a simple 2D histogram model for prediction in each case. Full simulation settings are described in detail in Appendix ~\ref{apx:baselines}.


\subsection{Empirical Coverage Lower Bound} \label{sec:coverage}

We first evaluate the empirical coverage of our procedure under the four simulation settings: HPP, Spiral, 3-IHPP, and the Hawkes under a range of hyperparameter settings. 
\begin{figure}[h]
  \centering
  \includegraphics[width=\textwidth]{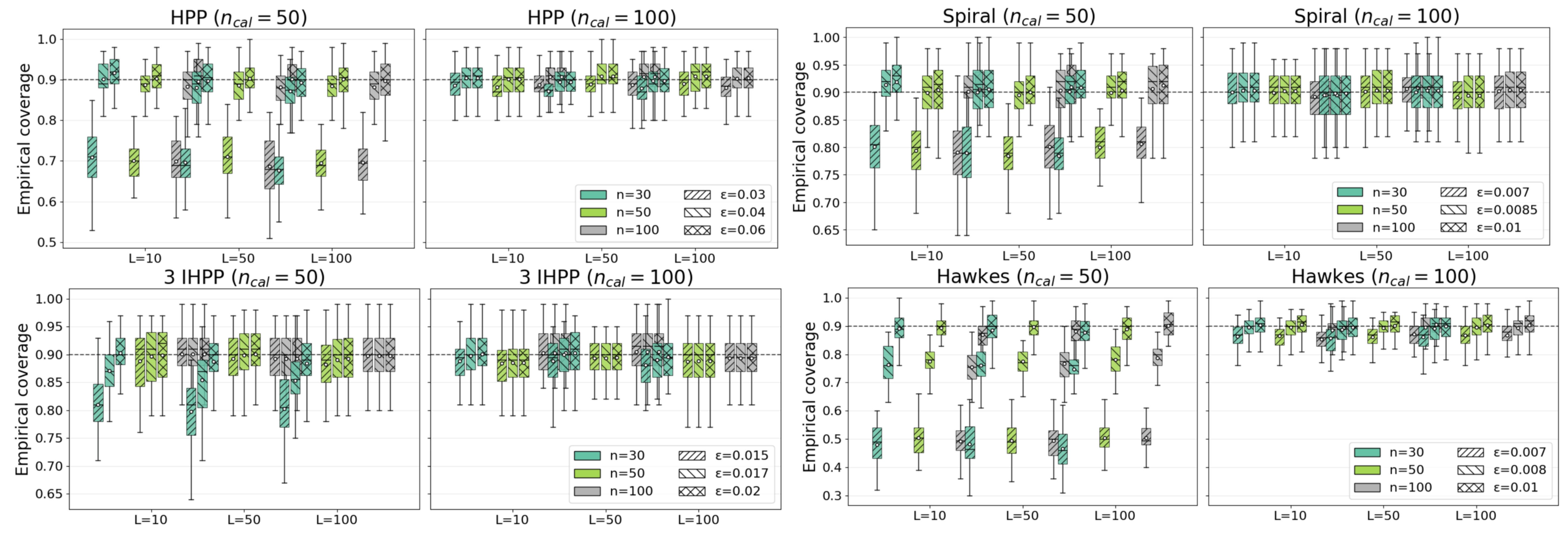}
  \captionsetup{font=small}
  \caption{Coverage by varying calibration size ($n_{cal}$), cardinality ($n$), slices in SSW distance ($L$), and manifold tolerance $(\varepsilon)$ over test simulations}
  \label{fig:coverage}
\end{figure}
Figure \ref{fig:coverage} shows the empirical coverage of our approach on each process under an increasing number of SSW slices ($L$), cardinality $n$, calibration samples $n_{cal}$, and empirical manifold tolerance $\varepsilon$. With enough calibration samples $(n_{cal} = 100)$, the number of slices $L$ and manifold tolerance $\varepsilon$ do not strongly impact coverage. However, the exact manifold tolerance value can strongly impact coverage. If the $\varepsilon$ is set too low, then our approach can severely under cover the target as implied by the lower bound in Proposition \ref{prop:manifold}. The number of slices $L$ has little to no effect, but should be set high enough to ensure stable estimation of SSW.

To further assess the effect of $\varepsilon$ on coverage, we fixed $L = 100,\ n = 100,\ n_{cal} = 100$ and computed the coverage of our method under of each process again under a dense sequence of $\varepsilon$ values. Figure \ref{fig:expCov} shows the coverage function of our method on each process, demonstrating the exponential coverage bound (Prop. \ref{prop:manifold}) and showing that if $\varepsilon$ is set high enough, then coverage is nearly exact. 
\begin{figure}[h]
  \centering
  \includegraphics[width=\textwidth]{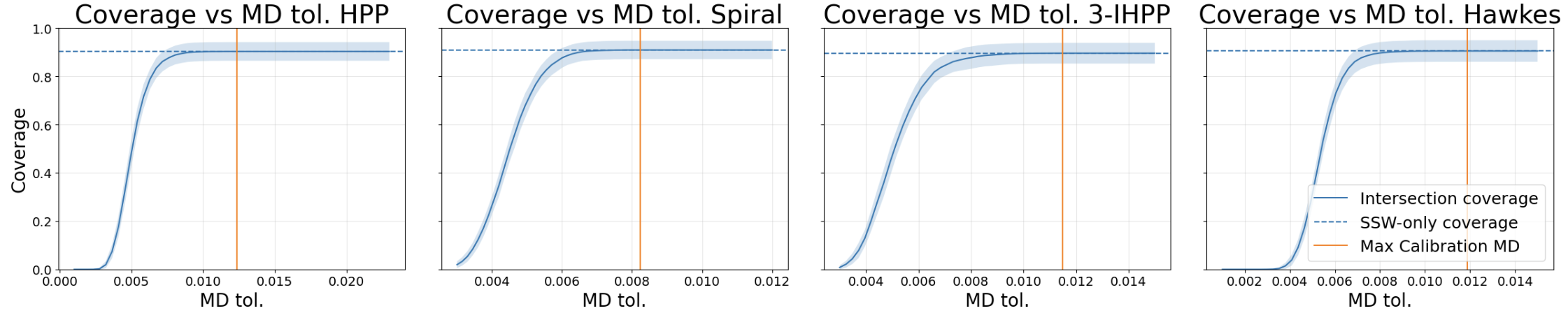}
  \captionsetup{font=small}
  \caption{Empirical coverage of the manifold constrained conformal prediction sets on each process. $n_{cal}$, $n$ and $L$ are all fixed at 100.}
  \label{fig:expCov}
\end{figure}
The vertical line in each panel of Figure \ref{fig:expCov} shows the smallest $\varepsilon$ value such that all training data is within $\varepsilon$ of the manifold estimated by the other training data. We use this as a data driven approach for selecting a conservative $\varepsilon$ value with nearly exact coverage in practice.

\subsection{Synthetic Data Evaluations} \label{sec:synthetic_data}
We use the four data generating mechanisms in Section \ref{sec:coverage}, HPP, Spiral, 3-IHPP, and Hawkes, to evaluate the predictive distribution produced by an unconstrained velocity field (VF) (\citep{harris2026flow}) and a manifold constrained velocity field (CVF) (Equation \eqref{eq:velocity}. For each process, we generate 50 training, calibration, and test point clouds with cardinality 100. For our predictive model, we construct a simple predictor by gridding the training observations into cells and normalizing the cell counts into probabilities, i.e. a 2D histogram (Appendix \ref{apx:density}).

We compare VF and CVF (Section \ref{sec:flow}) against five baseline approaches. These include a minimum area highest density region (HDR) estimator inspired by classical HDR and bayesian HPD sets (M1)\citep{Hyndman1996HDR, Polonik1997MinimumVolume}, 
a conformalized minimum area HDR using RAPS (M2) \citep{AngelopoulosBatesMalikJordan2021RAPS}, and three standard generative models: Diffusion (M3) \citep{Diffusion}, Normalizing Flow (M4) \citep{NormFlow}, and a Variational Auto Encoder (VAE) (M5) \citep{VAE}. The two HDR baselines, VAE (M5), and diffusion (M3) models sample directly from the outcome space, while VF, CVF, and the normalizing flow require initial samples from a discrete probability grid (Appendix \ref{apx:density}). This latter step introduces a discretization that the HDR, VAE, and Diffusion baselines do not require. Details of each method Appendix \ref{apx:baselines}.

\begin{table*}[h]
\centering
\captionsetup{font=small}
\caption{Simulation results across four settings. MinArea HDR (M1), Conformal MinArea HDR (M2), Diffusion (M3), Normalizing Flow (M4), VAE (M5), velocity field (VF), and constrained velocity field (CVF). Energy distance (ED) scaled by \(\times 10\) for IHPP-3 and Hawkes. Manifold distance (MD) is scaled by \(\times 100\). Standard deviations are reported in parentheses.}
\label{tab:simulation_results_updated}
\scriptsize
\setlength{\tabcolsep}{4pt}
\resizebox{\textwidth}{!}{
\begin{tabular}{llccccccc}
\toprule
Process & Metric & M1 & M2 & M3 & M4 & M5 & VF & CVF \\
\midrule

\multirow{4}{*}{HPP}
& \textbf{ED}
& 13.48 (24.73) & 13.75 (24.58) & 14.49 (23.64) & 14.29 (24.92) & 13.77 (24.73) & 1.04 (5.43) & 0.90 (5.89) \\
& {\normalfont\tiny\hspace{1em}Pred--Pred}
& 5.18 (0.00) & 5.64 (0.00) & 6.42 (0.01) & 5.11 (0.00) & 5.36 (0.01) & 33.12 (0.04) & 32.60 (0.04) \\
& {\normalfont\tiny\hspace{1em}True--Pred}
& 24.28 (15.07) & 24.65 (14.97) & 25.42 (14.48) & 24.65 (15.14) & 24.54 (15.07) & 32.05 (5.45) & 31.69 (5.73) \\
& \textbf{MD \(\times 100\)}
& 1.25 (0.09) & 1.27 (0.10) & 1.11 (0.08) & 1.11 (0.08) & 1.27 (0.11) & 1.76 (0.98) & 1.35 (0.58) \\
\midrule

\multirow{4}{*}{Spiral}
& \textbf{ED}
& 2.32 (1.01) & 1.34 (0.77) & 0.56 (0.52) & 1.03 (0.64) & 2.53 (1.48) & 0.51 (0.83) & 0.49 (0.83) \\
& {\normalfont\tiny\hspace{1em}Pred--Pred}
& 5.74 (0.01) & 5.61 (0.01) & 7.86 (0.01) & 7.74 (0.01) & 7.26 (0.01) & 4.62 (0.01) & 4.60 (0.01) \\
& {\normalfont\tiny\hspace{1em}True--Pred}
& 6.91 (0.65) & 6.35 (0.62) & 7.09 (0.44) & 7.27 (0.51) & 7.77 (0.89) & 5.45 (0.61) & 5.43 (0.62) \\
& \textbf{MD \(\times 100\)}
& 2.18 (0.12) & 1.86 (0.10) & 1.80 (0.24) & 2.35 (0.34) & 2.31 (0.44) & 0.99 (0.46) & 0.79 (0.29) \\
\midrule

\multirow{4}{*}{IHPP-3}
& \textbf{ED \(\times 10\)}
& 3.59 (9.07) & 0.09 (5.95) & 8.07 (15.42) & 14.03 (6.09) & 4.64 (7.74) & 0.06 (4.99) & -0.18 (4.94) \\
& {\normalfont\tiny\hspace{1em}Pred--Pred}
& 5.41 (0.01) & 5.64 (0.01) & 5.45 (0.01) & 5.29 (0.00) & 5.72 (0.01) & 5.40 (0.01) & 5.39 (0.01) \\
& {\normalfont\tiny\hspace{1em}True--Pred}
& 5.43 (0.65) & 5.37 (0.56) & 5.67 (0.89) & 5.89 (0.49) & 5.63 (0.56) & 5.24 (0.52) & 5.22 (0.52) \\
& \textbf{MD \(\times 100\)}
& 0.59 (0.09) & 0.67 (0.11) & 0.67 (0.15) & 1.17 (0.22) & 0.72 (0.14) & 0.56 (0.16) & 0.54 (0.18) \\
\midrule

\multirow{4}{*}{Hawkes}
& \textbf{ED \(\times 10\)}
& 3.61 (17.10) & 3.80 (17.86) & 12.51 (27.33) & 2.75 (15.57) & 3.73 (17.63) & 1.06 (14.52) & 1.09 (14.32) \\
& {\normalfont\tiny\hspace{1em}Pred--Pred}
& 5.21 (0.01) & 5.55 (0.01) & 5.45 (0.01) & 5.31 (0.01) & 5.19 (0.01) & 7.11 (0.01) & 7.11 (0.01) \\
& {\normalfont\tiny\hspace{1em}True--Pred}
& 6.44 (1.33) & 6.62 (1.34) & 7.00 (1.66) & 6.45 (1.26) & 6.44 (1.33) & 7.27 (1.17) & 7.27 (1.16) \\
& \textbf{MD \(\times 100\)}
& 0.76 (0.11) & 0.82 (0.13) & 0.70 (0.07) & 0.76 (0.06) & 0.74 (0.07) & 0.57 (0.19) & 0.55 (0.18) \\
\bottomrule
\end{tabular}
}
\end{table*}

We evaluate each predictive distribution by its energy distance \citep{rizzo2016energy} to the test data and by its manifold distance to the estimated data manifold (Section \ref{sec:manifold}). Energy distance is a proper scoring rule for probabilistic forecasts \citep{GneitingRaftery2007}. Because the outcomes are distributions, we use a Wasserstein energy distance \citep{sejdinovic2013equivalence} 
\begin{equation} \label{eqn:energy_distance}
   \text{ED}(X, Y) = 2\mathbb{E}[SSW(X,Y)]-\mathbb{E}[SSW(Y,Y')]-\mathbb{E}[SSW(X,X')] 
\end{equation}
where the standard $\ell^2$ norm base metric has been substituted for the spherical sliced Wasserstein (SSW) metric. We measure manifold distance using $\text{MD}(Y) = d_\mathcal{M}(Y)$ (Equation \ref{eq:manifold_discrepancy}).

Table~\ref{tab:simulation_results_updated} shows the average energy distance (ED) and manifold distance (MD) over the test data. CVF consistently achieves the lowest MD across each of the four settings except for the homogenous poisson process. This indicates that the CVF is both closer in distribution space to the target process and is closer to the data manifold, however, in reference to the homogenous poisson process the manifold method will not overlearn the manifold in instances of complete randomness. I.e., CVF is not achieving lower ED by simply shifting mass into disallowed regions. For example, by placing mass between the modes of the 3-IHPP to smooth estimation as the normalizing flow does. 

\subsection{Tropical Cyclones} \label{sec:tropical_cyclones}

We now apply our method to Tropical Cyclone genesis forecasting, which tries to predict where tropical cyclones are likely to originate each season \citep{QianGenesis}.
Tropical cyclone events are obtained from the IBTrACS database \citep{gahtan2026international}, which records the
time and location where each storm attains tropical cyclone strength and status \citep{QianGenesis}. 
We train a simple prediction model (Appendix ~\ref{apx:ModCyc}) on the 1980 to 1994 data to generate yearly global probabilistic forecast maps. We then use the 1995 to 2015 data as calibration for each conformal method (Sec. \ref{sec:synthetic_data}) and fit each of the baseline methods, VF, and CVF. 
For CVF, we set $\varepsilon$ large enough to fully cover the training data.
\begin{wraptable}[10]{r}{0.55\columnwidth}
    \vspace{-0\baselineskip}
    \captionsetup{font=small}
    \caption{Tropical cyclone energy distance metrics}
    \centering
    \setlength{\tabcolsep}{2.75pt}
    \begin{tabular}{lccccccc}
    \toprule
    Metric & M1 & M2 & M3 & M4 & M5 & VF & CVF \\
    \midrule
    \textbf{ED}\(\times 100\) & 6.78 & 9.16 & 0.95 & 2.90 & 2.48 & 2.52 & 2.39 \\
    {\normalfont\small\hspace{1em}Pred--Pred} & 0.08 & 0.07 & 0.08 & 0.07 & 0.11 & 0.08 & 0.08 \\
    {\normalfont\small\hspace{1em}True--Pred} & 0.12 & 0.13 & 0.09 & 0.10 & 0.11 & 0.10 & 0.10 \\
    \textbf{MD}\(\times 100\) & 1.19 & 1.01 & 1.53 & 2.85 & 1.23 & 1.05 & 0.49 \\
    \bottomrule
    \end{tabular}
    \label{tab:TCYC}
\end{wraptable}

Table \ref{tab:TCYC} shows the average energy distance (ED) (Equation \ref{eqn:energy_distance}), broken down by its constituent terms, and the average manifold distance ($\text{MD}$) over the test period (2015 - 2025). The  CVF method achieves the second lowest ED $(2.39)$ compared to either VF $(2.52)$ or any of the baseline methods other than Diffusion (M3). However, the decrease in energy distance for M3 is due to mode concentration and inflated variability off the manifold for individual samples. This is evident from \label{tab:TCYC} and a manifold score that is 3 times larger than CVF.
\begin{figure}[h]
  \centering
  \includegraphics[width=\textwidth]{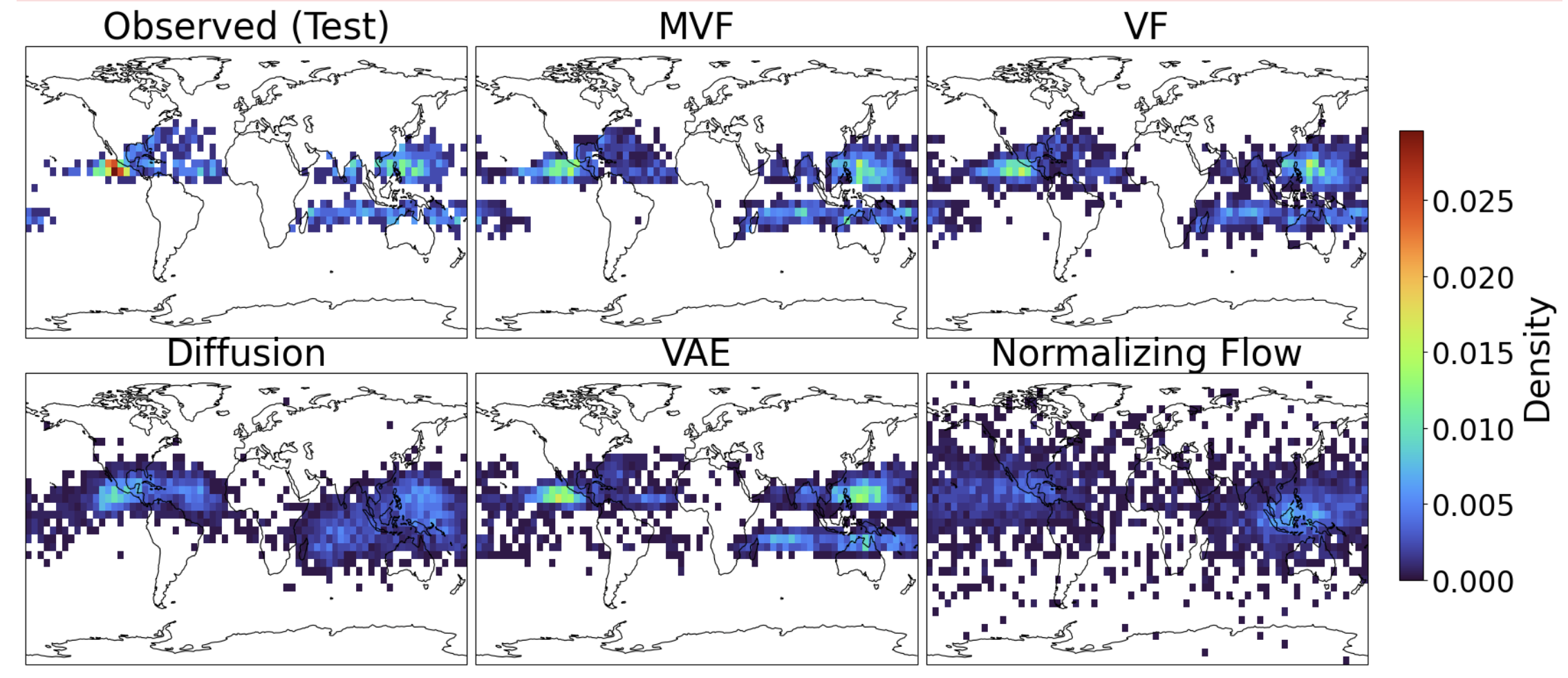}
  \caption{2025 predicted distribution of 5 methods with all observed tropical cyclones from 2015-2025}
  \label{fig:CoastTrop}
\end{figure}

This result is achieved by smoothing over the incorrect regions shifting the predictive distribution away from the tropical cyclone basins (high MD). Thus, while diffusion is statistically the most accurate, much of its predictive mass is invalid, and not usable in practice. conversely, by imposing a manifold constraint, CVF is able to reduce MD compared to each method at only a small cost to ED. 

Figure \ref{fig:CoastTrop} shows an example density from each method (Year 2025). The CVF density (\ref{fig:CoastTrop}b) is highly concentrated in the observed tropical cyclone basins, while VF is too dispersed with support far beyond the observed tropical cyclones (\ref{fig:CoastTrop}a). The other sampling methods all over support tropical cyclone genesis locations, particularly over the mid Atlantic and South America.

\subsection{Earthquake Events} \label{sec:earthquakes}

Finally, we apply our method to earthquake events in California. Our data comes from the ANSS Comprehensive Earthquake Catalog \citep{usgs_comcat_2026}, which we restrict to only magnitude $\ge4.0$ events. Unlike tropical cyclones, which have a soft manifold constraint, based on tropical cyclone basins, earthquakes are hard restricted to fault lines. Ensuring that earthquake UQ obeys this constraint is, therefore, essential.
As in Section \ref{sec:tropical_cyclones} we use a cumulative 2D empirical histogram predictor based on the forecasting model\citep{Werner_2010} up to the year 1985 where we start calibration in \ref{apx:EQ}. We compare VF and CVF against the same five baselines in Section \ref{sec:tropical_cyclones} again using ED and MD. We train the prediction model on 1976-1984 data, calibrate on 1985-2015 data, and test on 2016-2025 data.

\begin{wraptable}[9]{r}{0.55\columnwidth}
    \vspace{-1.5\baselineskip}
    \captionsetup{font=small}
    \caption{California earthquake energy distance metrics}
    \centering
    \setlength{\tabcolsep}{2.75pt}
    \begin{tabular}{lccccccc}
    \toprule
    Metric & M1 & M2 & M3 & M4 & M5 & VF & CVF \\
    \midrule
    \textbf{ED}\(\times 100\) & 1.69 & 1.56 & 1.98 & 17.50 & 2.09 & 0.54 & 0.47 \\
    {\normalfont\small\hspace{1em}Pred--Pred} & 0.00 & 0.00 & 0.00 & 0.01 & 0.01 & 0.04 & 0.02 \\
    {\normalfont\small\hspace{1em}True--Pred} & 0.02 & 0.02 & 0.02 & 0.10 & 0.03 & 0.04 & 0.02 \\
    \textbf{MD}\(\times 100\) & 0.07 & 0.15 & 0.64 & 9.03 & 0.24 & 1.79 & 0.19 \\
    \bottomrule
    \end{tabular}
    \label{tab:QuakeRes}
\end{wraptable}

Table~\ref{tab:QuakeRes} shows the average ED and MD metrics over the test period. Among all the methods, CVF and VF achieve the best overall performance, with the lowest ED with $(0.47)$ and $(0.54)$, respectively. However, CVF obtains a lower MD $(0.19)$ than all but M1, and M2 since they simply pull regions straight from the cumulative histogram model.

We note that CVF adequately improves upon VF by improving manifold score by almost a factor of 10 in addition to providing the optimal ED. We also note that M2 is a conformal approach, so our improvements in both ED and MD from its counterpart are not due to conformal prediction in general, but rather, the way in which we conformalize uncertainty.
\section{Discussion}

We introduced a new manifold-constrained conformal inference procedure for spatial event clouds. We represent each event cloud by its empirical measure which allows us to score predicted event clouds, or probability maps, against observed event clouds with (spherical) sliced Wasserstein distance. We then introduced an empirical manifold constraint that restricts our conformal sets to only those geometrically supported near the training data and derive a corresponding coverage lower bound. Finally, we derived a joint velocity field that allowed us to sample from the intersected sets and represent the predictive distribution as an ensemble. Empirical results on synthetic and geophysical experiments show that our method achieves substantially lower energy distance and manifold distance than standard baselines across a range of processes. Future work may include learning more robust manifold estimators or incorporating explicit physical constraints. 

\section{Acknowledgments}

This work was supported by the Laboratory Directed Research and Development program at Sandia National Laboratories, a multimission laboratory managed and operated by National Technology and Engineering Solutions of Sandia LLC, a wholly owned subsidiary of Honeywell International Inc. for the U.S. Department of Energy’s National Nuclear Security Administration under contract DE-NA0003525.
This paper describes objective technical results and analysis. Any subjective views or opinions that might be expressed in the paper do not necessarily represent the views of the U.S. Department of Energy or the United States Government.

\bibliography{main}
\bibliographystyle{iclr2026_conference}

\appendix
\section{Derivation of the Velocity Field} \label{apx:velocity_field}

We derive the minimum-norm velocity field used in Section 3.3. Fix \(X_{n+1}\) and write
\[
S(y)=S(Y,f(X_{n+1})), \qquad d(y)=d_M(Y),
\]
where \(y\in\mathbb R^{3|Y|}\) is the vectorized point cloud. The desired scalar dynamics are
\[
\frac{d}{dt}S(y(t))=-\lambda_1\{S(y(t))-\tau_\alpha\}, \qquad
\frac{d}{dt}d(y(t))=-\lambda_2\{d(y(t))-\varepsilon\}_+ .
\]
If \(v(y)=y'(t)\), then the chain rule gives
\[
\frac{d}{dt}S(y(t))=\nabla S(y)^\top v(y), \qquad
\frac{d}{dt}d(y(t))=\nabla d(y)^\top v(y).
\]
Thus the velocity must satisfy the two linear constraints
\[
\nabla S(y)^\top v(y)=-\lambda_1\{S(y)-\tau_\alpha\}, \qquad
\nabla d(y)^\top v(y)=-\lambda_2\{d(y)-\varepsilon\}_+ .
\]
Equivalently, with
\[
J(y)=
\begin{pmatrix}
\nabla S(y)^\top\\
\nabla d(y)^\top
\end{pmatrix},
\qquad
c(y)=
\begin{pmatrix}
\lambda_1\{S(y)-\tau_\alpha\}\\
\lambda_2\{d(y)-\varepsilon\}_+
\end{pmatrix},
\]
these constraints become \(J(y)v(y)=-c(y)\).

Since \(J(y)\in\mathbb R^{2\times 3|Y|}\), the system is generally underdetermined. We select the minimum-norm solution,
\[
v^\star(y)=\arg\min_{v\in\mathbb R^{3|Y|}}\frac12\|v\|_2^2
\quad\text{subject to}\quad J(y)v=-c(y).
\]
The Lagrangian is \(\mathcal{L}(v,\xi)=\frac{1}{2} v^\top v+\xi^\top\{J(y)v+c(y)\}\), where \(\xi\in\mathbb{R}^2\) is the vector of Lagrange multipliers. Stationarity with respect to \(v\) gives \(\partial \mathcal{L}/\partial v = v+J(y)^\top\xi=0\), hence \(v=-J(y)^\top\xi\). Substituting back into the constraint \(J(y)v=-c(y)\) gives
\[
-J(y)J(y)^\top\xi=-c(y),
\qquad\text{so}\qquad
J(y)J(y)^\top\xi=c(y).
\]
When \(J(y)\) has full row rank (i.e.\ \(\nabla_y S\) and \(\nabla_y d\) are linearly independent), \(J(y)J(y)^\top \in \mathbb{R}^{2\times 2}\) is invertible, so \(\xi=\{J(y)J(y)^\top\}^{-1}c(y)\), and therefore
\[
v^\star(y)=-J(y)^\top\{J(y)J(y)^\top\}^{-1}c(y).
\]
In the rank-deficient case, \(\nabla_y S\) and \(\nabla_y d\) are parallel, so \(J(y)J(y)^\top\) is singular. The constraint \(J(y)v=-c(y)\) is then either inconsistent (if \(c(y)\) is not in the column space of \(J(y)J(y)^\top\)) or has infinitely many solutions. Among the minimum-norm solutions in the consistent case, we replace the inverse with the Moore--Penrose pseudoinverse:
\[
v^\star(y)=-J(y)^\top\{J(y)J(y)^\top\}^{+}c(y).
\]
This recovers the full-rank solution when \(J(y)\) has full row rank, and otherwise projects \(c(y)\) onto the column space of \(J(y)J(y)^\top\) before solving, giving the minimum-norm least-squares velocity.

The full-rank expression satisfies the prescribed scalar dynamics exactly, since
\[
J(y)v^\star(y)
=
-J(y)J(y)^\top\{J(y)J(y)^\top\}^{-1}c(y)
=
-c(y).
\]
Consequently,
\[
\frac{d}{dt}S(y(t))=-\lambda_1\{S(y(t))-\tau_\alpha\}, \qquad
\frac{d}{dt}d(y(t))=-\lambda_2\{d(y(t))-\varepsilon\}_+ .
\]
For numerical stability we use the damped form
\[
v_\eta(y)=-J(y)^\top\{J(y)J(y)^\top+\eta I\}^{-1}c(y),
\]
with \(\eta>0\). This is the regularized version of the same minimum-norm controller and converges to the exact expression as \(\eta\downarrow 0\) whenever \(J(y)J(y)^\top\) is nonsingular.

\section{Proof of Proposition \ref{prop:manifold}} \label{apx:proof_manifold}

\begin{proof}
Let $S_{n+1}=S(Y_{n+1},f(X_{n+1}))$. Since
\[
\calC(X_{n+1})=\{Y\in\cY: S(Y,f(X_{n+1}))\le \taua,\ \dM(Y)\le \varepsilon\},
\]
if $Y_{n+1}\notin \calC(X_{n+1})$, then either $S_{n+1}>\taua$ or $\dM(Y_{n+1})>\varepsilon$. Therefore
\[
\mathbb{P}\bigl(Y_{n+1}\notin \calC(X_{n+1})\bigr)
\le
\mathbb{P}(S_{n+1}>\taua)+\mathbb{P}\bigl(\dM(Y_{n+1})>\varepsilon\bigr).
\]
Because $\cD_0$ is independent of the calibration sample, $\taua$ is the usual split conformal cutoff based on the exchangeable calibration scores $S_i=S(Y_i,f(X_i))$, so $\mathbb{P}(S_{n+1}>\taua)\le \alpha$.

Therefore, we only need to bound $\mathbb{P}(\dM(Y_{n+1})>\varepsilon)$. Write $E=\{\dM(Y_{n+1})>\varepsilon\}$. Because the distribution of $Y$ is strictly supported on $\cM$ and $A_1,\dots,A_N$ cover $\cM$, whenever $E$ occurs we still must have $Y_{n+1} \in A_k$ for some $k$. However, if we have $Y_i^0\in A_k$ for some $i$, then the diameter assumption on $A_k$ means $\SSW(\mu_{Y_{n+1}},\mu_{Y_i^0})\le \varepsilon$, and so $\dM(Y_{n+1})\le \varepsilon$, which is a contradiction. 

Thus, on the event $E\cap\{Y_{n+1}\in A_k\}$, none of the training points $Y_1^0,\dots,Y_{n_0}^0$ can also lie in $A_k$. Summing over all covering sets $A_k$ gives us the union bound
\[
\mathbb{P}(E)
\le
\sum_{k=1}^N \mathbb{P}\bigl(Y_{n+1}\in A_k,\ Y_i^0\notin A_k \text{ for all } i=1,\dots,n_0\bigr).
\]

By assumption all of the targets $Y_1^0,\dots,Y_{n_0}^0,Y_{n+1}$ are i.i.d. For each $k$, the probability above is
\[
\mathbb{P}(Y\in A_k)\bigl(1-\mathbb{P}(Y\in A_k)\bigr)^{n_0}.
\]
Since $\mathbb{P}(Y\in A_k)\ge \beta_\varepsilon$ for every $k$, this is at most $\mathbb{P}(Y\in A_k)(1-\beta_\varepsilon)^{n_0}$. Summing over $k$ gives
\[
\mathbb{P}(E)
\le
(1-\beta_\varepsilon)^{n_0}\sum_{k=1}^N \mathbb{P}(Y\in A_k)
\le
N(1-\beta_\varepsilon)^{n_0}.
\]
Combining the two bounds yields
\[
\mathbb{P}\bigl(Y_{n+1}\in \calC(X_{n+1})\bigr)
\ge
1-\alpha-N(1-\beta_\varepsilon)^{n_0}.
\]
\end{proof}

\section{Density-Valued Predictions}
\label{apx:density}
 
The conformity score $S(Y, f(X)) = \SSW(\mu_Y, f(X))$ requires both the observed target and the model prediction to be representable as elements of $P_2(\Sp^2)$. For observed targets this is immediate since any finite point cloud $Y$ maps to its empirical measure $\mu_Y$ as defined above. For predictions the situation is more varied. Many forecasting systems do not produce point cloud samples but instead output a continuous probability density $p : \Sp^2 \to \R_+$ over event locations, for example as a spatial intensity map from a neural network, a kernel density estimate, or a gridded output from a climate model. In all such cases the prediction is already an element of $P_2(\Sp^2)$, and the SSW score applies directly once the density is discretized to be compatible with the sliced Wasserstein computation.
 
We discretize the predicted density onto a regular latitude-longitude grid. Parameterize $\Sp^2$ by longitude $\lambda \in [-\pi, \pi)$ and latitude $\varphi \in (-\pi/2, \pi/2)$ via $x(\lambda, \varphi) = (\cos\varphi\cos\lambda,\, \cos\varphi\sin\lambda,\, \sin\varphi)$, with surface area element $d\sigma = \cos\varphi\, d\lambda\, d\varphi$. Fix grid edges $\{\lambda_i\}_{i=0}^{N_\lambda}$ and $\{\varphi_j\}_{j=0}^{N_\varphi}$ partitioning the domain, and define rectangular cells
\begin{equation}
    B_{ij} = \{x \in \Sp^2 : \lambda_i \leq \lambda(x) < \lambda_{i+1},\;
    \varphi_j \leq \varphi(x) < \varphi_{j+1}\}.
\end{equation}
The probability mass assigned to bin $(i,j)$ is
\begin{equation}
  q_{ij} = \int_{B_{ij}} p(x)\, d\sigma(x) \approx p(c_{ij})\,
  \cos\bar\varphi_j\,\Delta\lambda\,\Delta\varphi,
\end{equation}
where $c_{ij} = x(\bar\lambda_i, \bar\varphi_j)$ is the bin center at midpoint coordinates $\bar\lambda_i = (\lambda_i + \lambda_{i+1})/2$, $\bar\varphi_j = (\varphi_j + \varphi_{j+1})/2$, and the right-hand approximation holds when $p$ is smooth relative to the grid resolution. The weights $\{q_{ij}\}$ define a discrete measure
\begin{equation}
    \nu = \sum_{i,j} q_{ij}\, \delta_{c_{ij}} \in P_2(\Sp^2),
    \label{eq:quantized_measure}
\end{equation}
which is a valid element of the Wasserstein space $P_2(\Sp^2)$ and converges weakly to the continuous measure $p\,d\sigma$ as the grid is refined. The SSW distance between $\nu$ and any empirical measure $\mu_Y$ is then computed exactly as in the score definition, with the quantized measure replacing $f(X_{n+1})$.
 
To draw a point cloud realization from $\nu$ for use in the constrained flow, we sample bin indices $(i,j) \sim \mathrm{Categorical}(\{q_{ij}\})$ independently $m$ times and returns the corresponding centers. The resulting empirical measure converges to $\nu$ as $m \to \infty$. In practice $m$ is chosen to match the typical cardinality of observed clouds in the calibration set, so that the conformity scores from density-valued predictions are on the same scale as those from point cloud predictions.

\section{Competing Baseline Regions}
\label{apx:baselines}

Note that predictions for each of our numerical results, tropical cyclones, and earthquake models have outputs as discrete probability grids that we represent by
\[
P = (P_{ij}) \in [0,1]^{n_\phi \times n_\lambda},
\qquad
\sum_{i=1}^{n_\phi}\sum_{j=1}^{n_\lambda} P_{ij} = 1,
\]
defined on a latitude/longitude grid on \(S^2\). Let
\[
A = (A_{ij})
\]
denote the cell area weights, with \(A_{ij} > 0\). For an events at location \((\phi,\lambda)\), let \((i^\star,j^\star)\) be the nearest grid cell.

\paragraph{HDR Method} The HDR \citep{Hyndman1996HDR} is obtained by sorting all cells by forecasts \(P_{ij}\) in decreasing order and only keeping the smallest collection of cells whose cumulative probability is at least \(1-\alpha\).

Let
\[
\pi(1),\pi(2),\dots,\pi(m)
\]
be an ordering of the \(m=n_\phi n_\lambda\) grid cells such that
\[
P_{\pi(1)} \ge P_{\pi(2)} \ge \cdots \ge P_{\pi(m)}.
\]
Define
\[
K_{\mathrm{HDR}}(\alpha)
=
\min\left\{
k:\sum_{\ell=1}^k P_{\pi(\ell)} \ge 1-\alpha
\right\}.
\]
Then we have the HDR region
\[
R_{\mathrm{HDR},\alpha}
=
\{\pi(1),\dots,\pi(K_{\mathrm{HDR}}(\alpha))\}.
\]

\paragraph{Minimum area (M1)} The minimum area region ranks cells by probability mass per unit area. This method was influenced by Bayesian work \citep{BoxTiao1973}

\[
D_{ij} = \frac{P_{ij}}{A_{ij}},
\]

then as in HDR, keep the smallest number of cells with cumulative probability of at least $1-\alpha$

Let
\[
\pi_A(1),\dots,\pi_A(m)
\]
be an ordering such that
\[
D_{\pi_A(1)} \ge D_{\pi_A(2)} \ge \cdots \ge D_{\pi_A(m)}.
\]
Then define
\[
K_{\mathrm{MA}}(\alpha)
=
\min\left\{
k:\sum_{\ell=1}^k P_{\pi_A(\ell)} \ge 1-\alpha
\right\},
\]
and the minimum-area region is
\[
R_{\mathrm{MA},\alpha}
=
\{\pi_A(1),\dots,\pi_A(K_{\mathrm{MA}}(\alpha))\}.
\]

This differs from HDR only by incorporating area into the equation. This method is identical if all cells have equal area.

\paragraph{Conformal HDR Min Area (M2)}The conformal minimum area construction uses density score
\[
D_{ij}=\frac{P_{ij}}{A_{ij}}.
\]
For an observed event in cell \((i,j)\), define the conformal score
\[
S^{\mathrm{MA}}(i,j;P,A)
=
\sum_{r,s} P_{rs}\,\mathbf 1\{D_{rs} > D_{ij}\}.
\]

With calibration scores
\[
S_t^{\mathrm{MA}} = S^{\mathrm{MA}}(i_t,j_t;P_t,A),
\qquad t=1,\dots,n,
\]
and conformal quantile \(\hat q_\alpha\), the conformal minimum-area region is
\[
R_{\mathrm{cMA},\alpha}
=
\{(i,j): S^{\mathrm{MA}}(i,j;P_{n+1},A) \le \hat q_\alpha\}.
\]

Note that conformal minimum area region is simply just the area weighted version of conformal HDR.

\paragraph{Diffusion (M3)}

We provide 3 generative baselines for point cloud generation. One drawback is that generative baselines we provide have no probabilistic guarantee that conformal inference and HDR methods give. 

We implement a conditional denoising diffusion probabilistic model following the DDPM formulation \cite{Diffusion} directly on point clouds in $\mathbb{R}^3$ with a Gaussian forward noising process applied to the target cloud. We use a conditional denoiser to predict injected noise applied to the target cloud, the forecast cloud, and the diffusion time step. Conditioning is simply implemented with a permutation invariant encoder of the forecast cloud and condition dropout during training. Reverse sampling is performed in $\mathbb{R}^3$ with a projection back to the sphere at the end of the algorithm. We include a small Chamfer loss on the cloud estimate as well.

\paragraph{Spherical Flow (M4)} We generate a normalizing flow for the sphere by training a continuous velocity model directly on point clouds. Each sample is a cloud with $n$ points. To generate training pairs, we sample an initial cloud $X_0$ uniformly on the sphere and interpolate from $X_0$ to $X_1$ using spherical linear interpolation. A neural velocity field $v_\theta(x,t)$ is taken as an input and the output is a tangent vector on the sphere. The network uses two hidden layers of width 128 with tanh nonlinearities, and the output is projected onto the tangent space at $x$. The training target are the difference in velocity along the linear interpolation projected back into the tangent space. The loss minimized MSE between the predicted and target tangent velocities. We train for 200 epochs using Adam with learning rate $10^{-3}$. We sample an initial cloud uniformly on the sphere and integrate the learned velocity field for 100 Euler steps renormalizing points to the sphere if necessary after each step.

\paragraph{Variational Auto Encoder (VAE)(M5)} 

We also utilize a forecast conditioned VAE directly on point clouds. The VAE uses permutation invariant point cloud encoders for the forecast and target clouds, a latent Gaussian representation, and a residual decoder that corrects the forecast cloud. Training uses a Chamfer reconstruction term, a KL penalty, and small repulsion regularizer to encourage diverse sampling and preventing mode collapse.

\section{Simulated Point Processes}
\label{apx:PP}

\subsection{Homogeneous Poisson Process}

\begin{figure}[h]
  \centering
  \includegraphics[width=\textwidth]{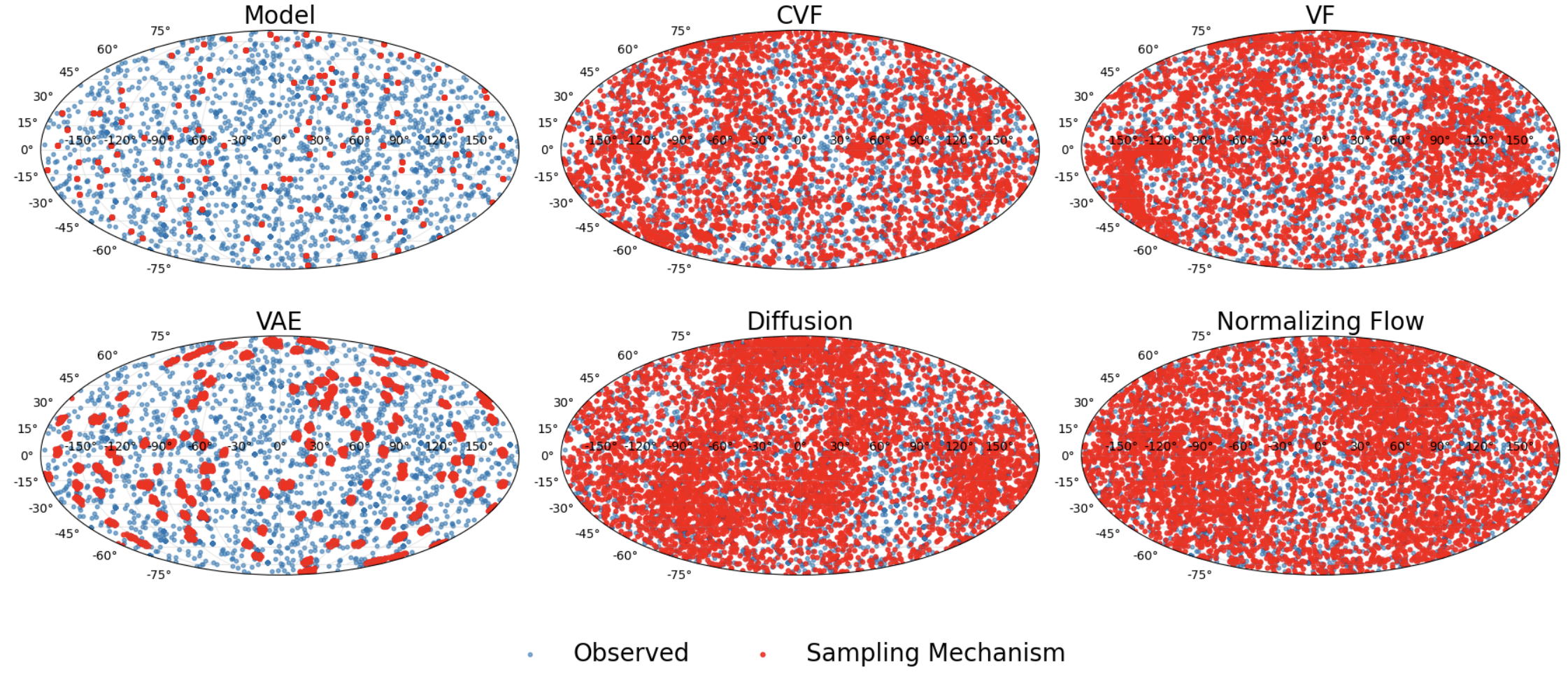}
  \captionsetup{font=small}
  \caption{Homogenous Poisson model with generated sample prediction}
  \label{fig:examples}
\end{figure}

We simulate a homogeneous Poisson process on the sphere with \[N \sim Poisson(4\pi \lambda)\] where $\lambda > 0$ is a constant intensity per unit area and $N$ is the total count. We condition on $N$ and points are i.i.d. uniform on the sphere. In our experiment, we use $\lambda = 20$

\subsection{Misspecified Spiral}

\begin{figure}[h]
  \centering
  \includegraphics[width=\textwidth]{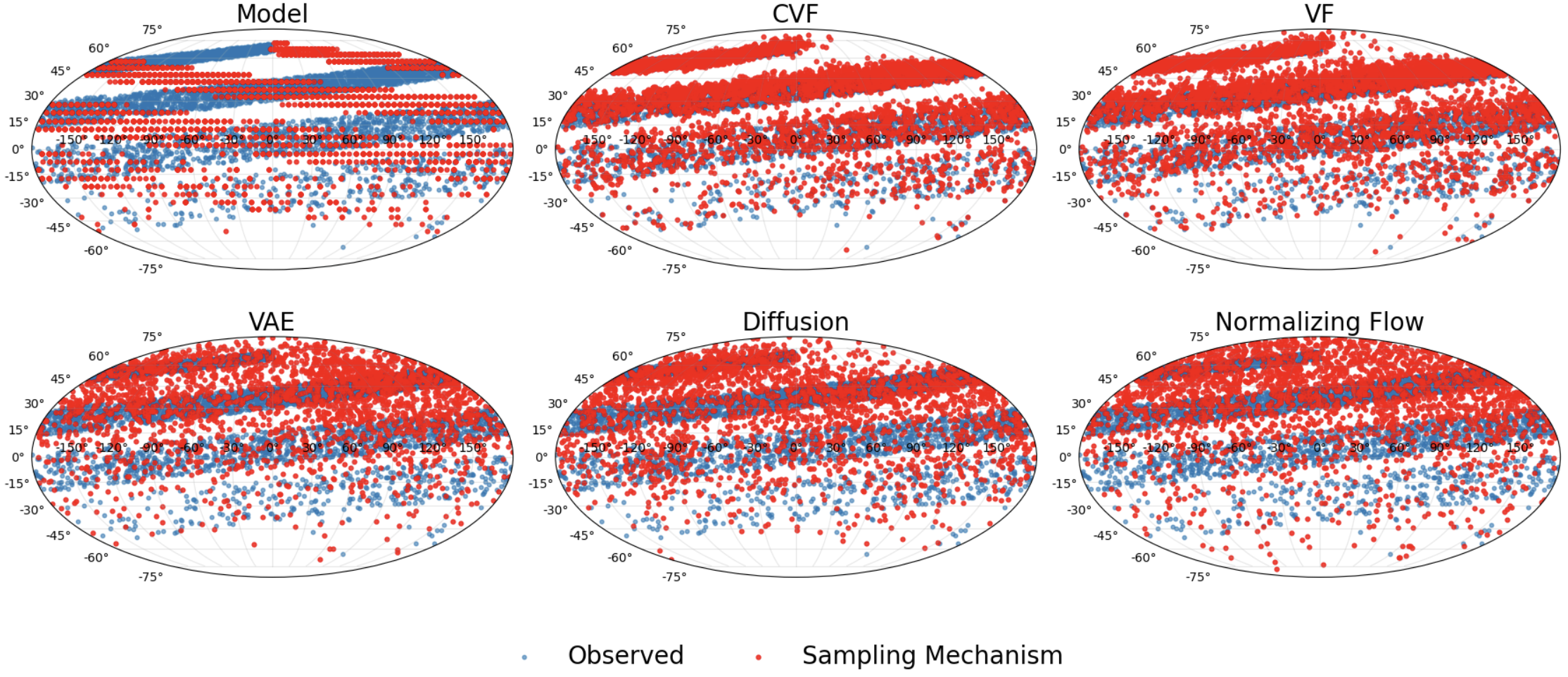}
  \captionsetup{font=small}
  \caption{Spiral Simulation with opposite direction oracle model fit (Top Left)}
  \label{fig:examples}
\end{figure}

The spiral simulation is built on the sphere using a spiral curve structure. This simulation also has a misspecified model where the spiral spins the opposite direction and has higher rate near the top of the sphere. To generate points along the spiral, the parameter $t$ is sampled from a beta distribution with parameters $\alpha = 3$ and $\beta = 1.2$. This process places points on expanding bands as they travel to the bottom of the sphere. Each replicate in the spiral contains a true cloud of 100 points and has 50 timepoints to mimic smaller data with more difficult manifold structure.

\subsection{Inhomogeneous Poisson Processes}

\begin{figure}[h]
  \centering
  \includegraphics[width=\textwidth]{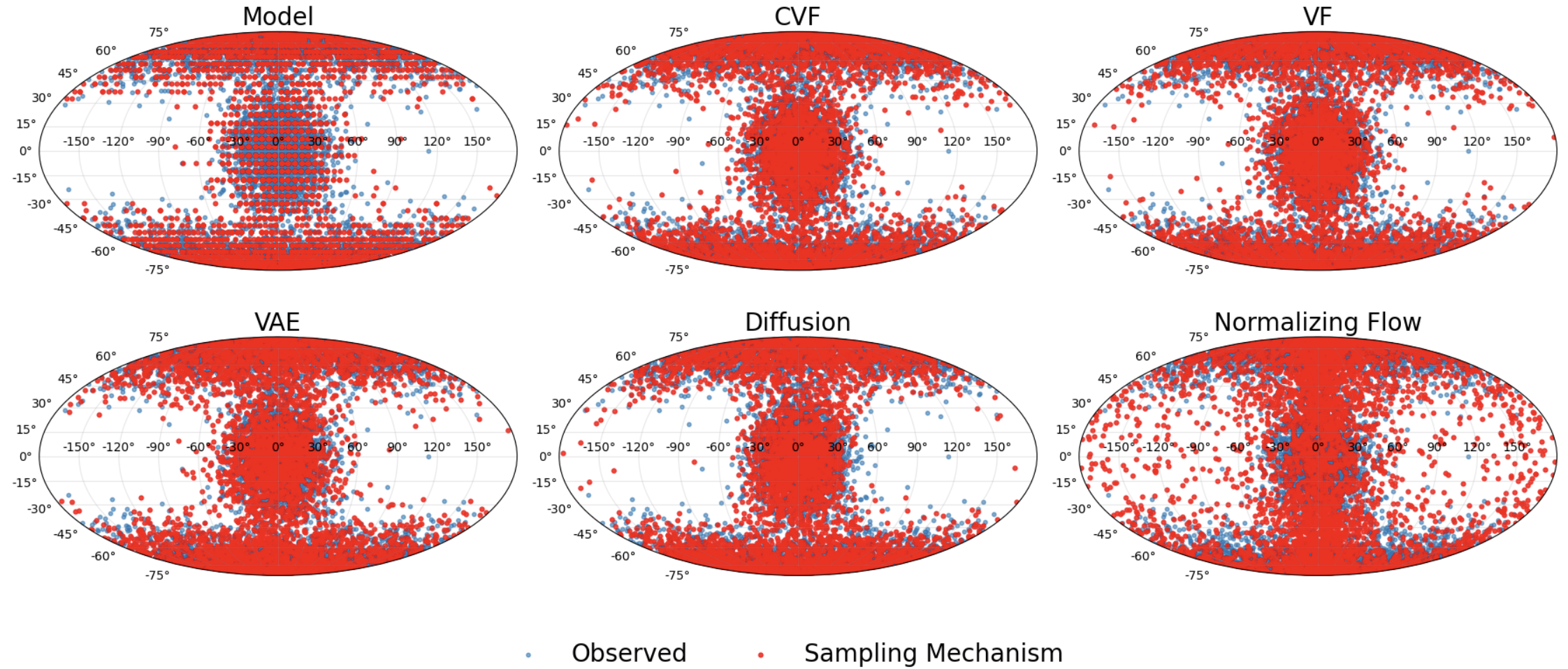}
  \captionsetup{font=small}
  \caption{Inhomogenous Poisson with 3 modes model with oracle model fit (Top Left)}
  \label{fig:examples}
\end{figure}

Another important aspect that our method must show is high performance for point processes that have structured, non-uniform behavior unlike homogeneous Poisson processes \citep{baddeley2015spatial}. We consider two inhomogeneous with differing intensity functions. One to represent a single large mode on the globe, and another to represent three different modes on the globe. Both will utilize aspects of the von Mises Fisher distribution which is widely used in spatial statistics on the globe \citep{banerjee2005clustering}.

We define IHPP-3 (3 hotspots) as an inhomogeneous Poisson process on a "belt" defined as $B(x) = e^{-(\frac{z}{\sigma})^2}$ placing the centers at the central meridian and three modes defined as $H(x) = \sum_m^Me^{\kappa_h\mu_m'x}$ This leads to the defined intensity function \[\lambda(x) = \lambda_0(1+be^{\frac{-z^2}{\sigma_b^2}} + a_h \sum_m^Me^{\kappa_h\mu_m'x})\] To sample we again use thinning from the same uniform distribution as the 1 hotspot model with the conserveative upper bound of $\lambda_{max}$ We utilize parameters \[\lambda_0 = 0.01, \ b = 0.6\ \sigma_b = 0.3,\ M = 3,\ a_h = 1, \ \kappa_h = 10\]

\subsection{Hawkes Process}

\begin{figure}[h]
  \centering
  \includegraphics[width=\textwidth]{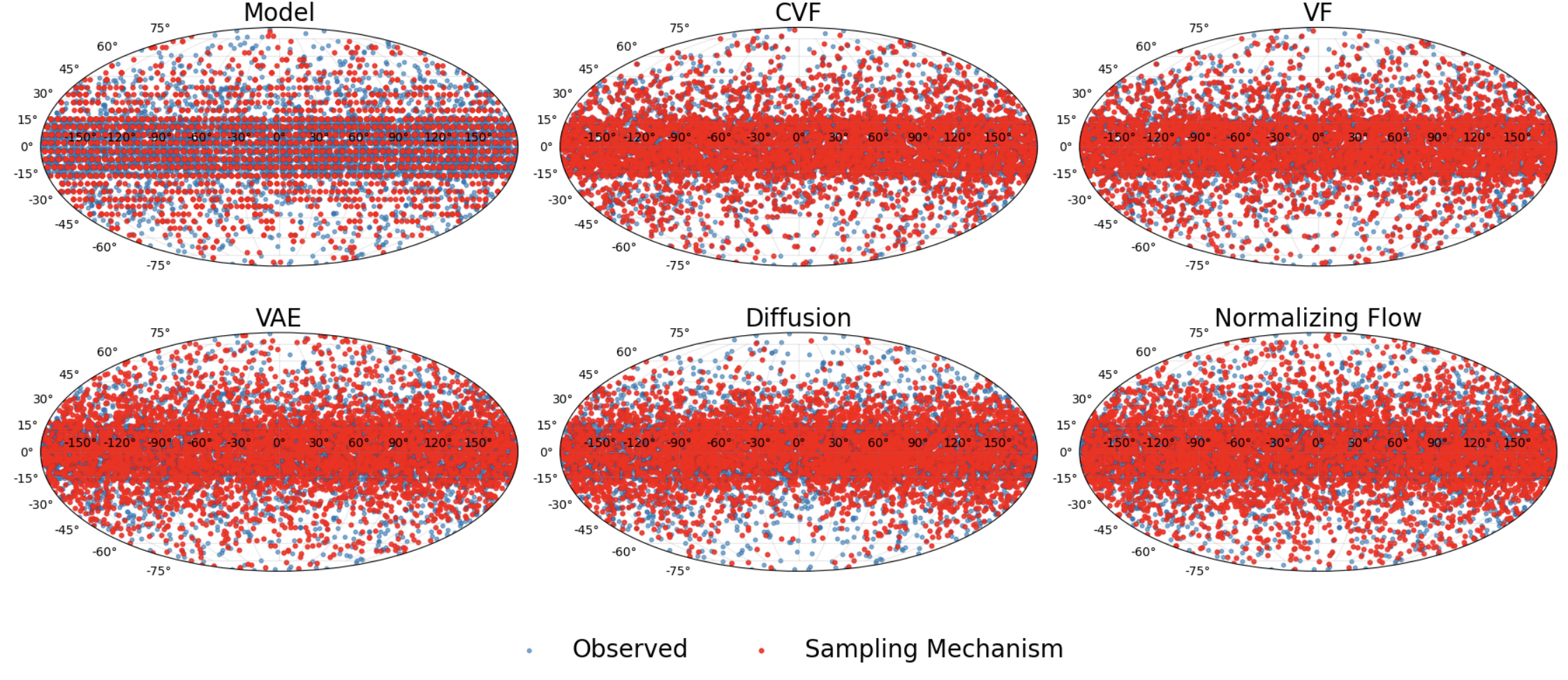}
  \captionsetup{font=small}
  \caption{Hawkes process on a belt with oracle model fit (Top Left)}
  \label{fig:examples}
\end{figure}

Hawkes processes are "self-exciting" point processes that were designed to have inhomogeneous Poisson process randomness, but to have generated points have larger probabilities to generate more points \citep{hawkes1971spectra}. Specifically in Earthquake modeling, the idea of utilizing Hawkes based processes for simulation or modeling is popular \citep{ogata1988statistical}. Due to our interest in recreation of structures occurring due to a structure on a sphere, we create an equatorial belt utilizing a Hawkes process where events near the equator are more common and slowly taper out as the distance increases with a self-exciting property. 

We define Hawkes process $Hawkes$ on $[0,T]\times S^2$ with temporal Hawkes intensity 

\[\lambda(t|H_t) = \mu + \sum_{T_i < t}\alpha e^{-\beta(t-T_i)}; \ t \in [0, T]\]

We then include type probabilities \[P(immigrant|H_t) = \frac{\mu}{\lambda(t)}, \ P(Offspring|H_t) = \frac{c_j(t)}{\lambda(t)}\] In addition, with the same intensity, we define a "belt" region as $B = \{x \in S^2: |x_3| < z_{max}\}$ where $x = \{x_1, x_2, x_3\}$ and $z_{max} \in (0,1)$. We let $g_{belt}$ be uniform on the belt and the immigrant points have locations \[X|offspring \sim uniform(belt)\] and offspring as \[X|offspring \sim vMF(\mu=X_j, \kappa_{space})\]

We utilize parameters \[\mu = 0.7, \alpha = 0.6, \beta = 1.5, \kappa_{space} = 12, z_{max} = 0.3\]

\subsection{Tropical Cyclone Model}
\label{apx:ModCyc}

We construct a frozen presence background model for tropical cyclone genesis prediction. We utilize environmental data obtained from the Copernicus data store along with cyclone genesis data from the IBTrACS database for the first time a storm attains tropical cyclone status. For each year for the training period (1980-1994), the genesis locations are mapped onto the same forecast grid as a distribution. This is then smoothed and normalized to form a yearly target probability map for downstream analysis. 

The purpose of this model is to produce a probability distribution over the globe to estimate the spatial distribution of genesis locations. This fixed design will allow us to not violate exchangeability for conformal methods.

Let \[X_y \in \mathbb{R}^{C\times H \times W}\] denote the yearly environmental tensor for year $y$ by stacking 12 predictor fields across all variables. Let \[\mathcal{G}_y = \{(i_k, j_k)\}^{N_y}_{k=1}\] denote the set of observed genesis cells in year $y$. For each forecast year $y$ prior to 1995, we form a training dataset from all years $y' < y$. Any positive values are taken from cells in $\mathcal{G}_y$ while the negative are drawn from randomly sampled background cells. The vector associated with a grid cell is the channel vector \[X_{y'}[:,i,j]\] We fit a regularized logistic regression model on this dataset interpreting the fitted output as a relative occurrence intensity. 

We let $z_r = 1$ if row $r$ contained a genesis location or cell, and $z_r = 0$ if row $r$ corresponds with a background cell. We then fit $l_2$ regularized logistic regression with conditional probability  \[P(z_r = 1|\tilde{x}_r) = \frac{1}{1+exp(-(b+\tilde{x}_r'\beta))}\] The fitted parameters are simply obtained by minimizing the regularized logistic loss. This model is held out for all later forecast years. For each year, we evaluate the fitted model at every cell on the globe to obtain \[\hat p_y(i,j) = \frac{1}{1+exp(-(b+\tilde{x}_{y,i,j}'\beta))}\] and normalize to \[\hat\pi_y(i,j) = \frac{\hat p_y(i,j)}{\sum^H_{i=1}\sum^W_{j=1}\hat \pi_y(i,j)}\] as the forecast probability map for year $y$.

For conformal methods, we hold out the years 1995-2014 as calibration with no additional training to the model. SSW scores are recorded for VF and CVF methods. Our data typically has between 50 and 110 points a year.

\subsection{Earthquakes}
\label{apx:EQ}

\begin{figure}[h]
  \centering
  \includegraphics[width=\textwidth]{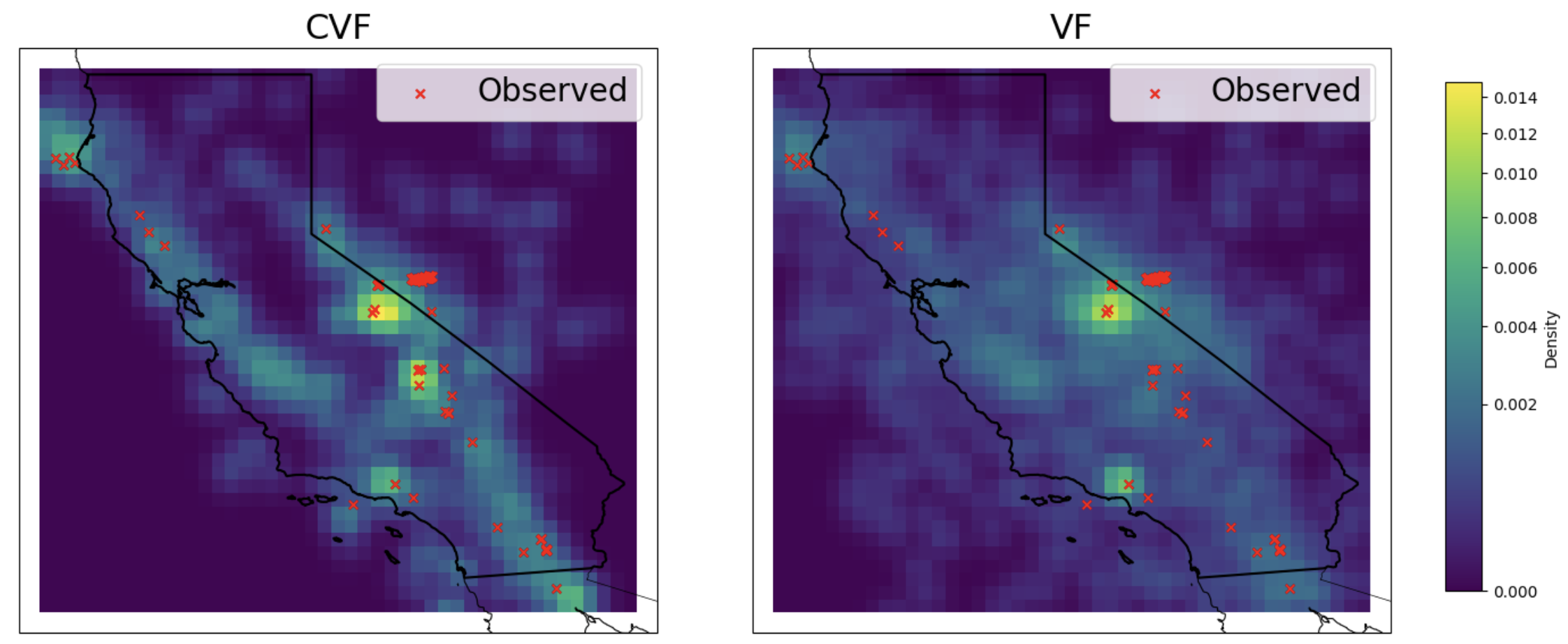}
  \captionsetup{font=small}
  \caption{Earthquake velocity field and constrained velocity field with observed values for target year 2024}
  \label{fig:examples}
\end{figure}

We model earthquake occurrences in the region of California using a grid spatial forecast model constructed from historical data. Let the past earthquake data be \[\mathcal{D} = \{(t_i, l_i^{lat}, l_i^{lon}, m_i, d_i)\}^N_{i=1}\] where $t_i$ is the time of the event, $(l_i^{lat}, l_i^{lon})$ is the earthquake location, $m_i$ is the magnitude, and $d_i$ is the depth of the earthquake. We restrict data to $m_i > 4.0$ in the California region, denoted by \[ \mathcal{R} = [l^{lon}_{min}, l^{lon}_max]\times [l^{lat}_{min},l^{lat}_{max}]\] We discretize this on a fixed latitude longitude grid and let each grid contain cells $\mathcal{C}_{jk}$ indexed by the latitude bin $j$ and longitude bin $k$ with center $c_{jk}$.

Instead of re-fitting the model each year, we instead construct a fixed spatial forecast from an initial training block in order to support the exchangeability assumption. We fit the model once using earthquakes from 1976 to 1984, use the years 1985 through 2015 for calibration, and evaluate on the 2016 through 2025 data. The observed earthquakes in each calibration or test year are then compared against samples from the fixed forecast grid.

The fitted forecasting model is simply a cumulative two dimensional histogram predictor. For each gridcell $\mathcal{C}_{jk}$, we count the number of historical earthquakes that fall into that cell \[N_{jk}(T) = \sum^N_{i=1} 1\{ t_i < T, m_i \geq 4.0, (l_i^{lat}, l_i^{lon})\in \mathcal{C}_{jk}\} \] $N_{jk}(T)$ defines an unnormalized spatial intensity which is then converted to a probability surface by normalization. This gives a discrete forecast distribution over the grid of California representing an empirical gridded version of an adaptive smoothed seismicity model typically used in forecasts. Our data typically has 11 to 140 points per yearly point cloud.

\subsection{Simulation Timing}

Spherical Sliced Wasserstein distance takes a longer time to compute, yet our method produces samples in a reasonable amount of time.

\begin{table}[h]
\centering
\captionsetup{font=small}
\caption{Mean integration time in seconds to integrate to every point cloud in the calibration set, across different numbers of points $n$, slices $L$, and integration steps. Standard deviations are shown in parentheses under 20 repetitions.}
\label{tab:timing_results}
\setlength{\tabcolsep}{5pt}
\begin{tabular}{ccc c}
\toprule
$n$ & $L$ & Steps & Time (s) \\
\midrule
50  & 25  & 10 & 30.27 (0.41) \\
50  & 25  & 20 & 49.13 (0.42) \\
50  & 50  & 10 & 38.68 (0.44) \\
50  & 50  & 20 & 64.37 (0.37) \\
50  & 100 & 10 & 43.23 (0.41) \\
50  & 100 & 20 & 71.58 (0.63) \\
\midrule
100 & 25  & 10 & 92.08 (0.59) \\
100 & 25  & 20 & 166.53 (0.37) \\
100 & 50  & 10 & 97.68 (0.44) \\
100 & 50  & 20 & 175.91 (0.51) \\
100 & 100 & 10 & 119.35 (0.72) \\
100 & 100 & 20 & 215.85 (0.92) \\
\bottomrule
\end{tabular}
\end{table}

\subsection{Variance and Sample Diversity}

We now evaluate the effectiveness of sampling via the joint velocity field (Equation \ref{eq:velocity}) for representing the manifold constrained prediction sets in practice (\ref{eq:empirical_manifold_region}). Figure \ref{fig:VAR} shows the distribution of SSW distances between actual point clouds at the 90th quantile and between our models samples at 90th quantile and the actual data.
\begin{figure}[h]
  \centering
  \includegraphics[width=\textwidth]{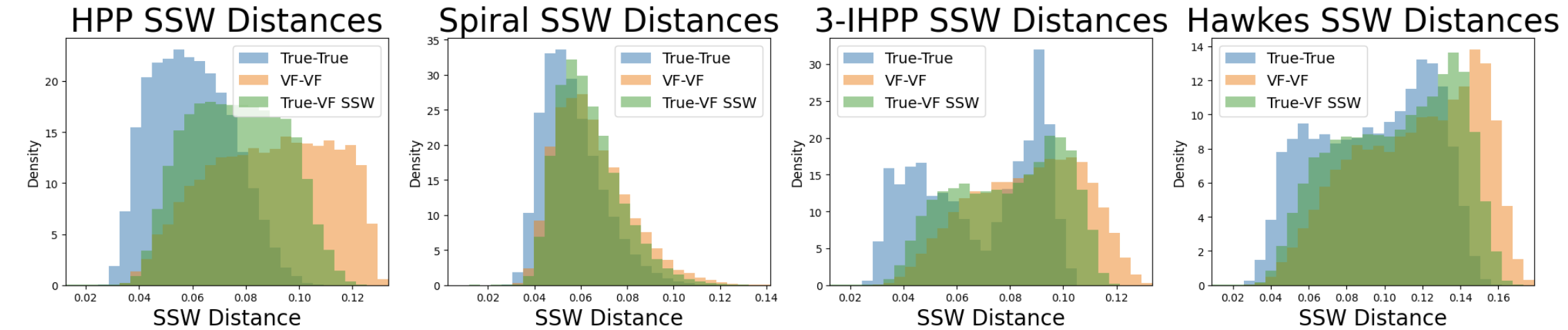}
  \captionsetup{font=small}
  \caption{Distribution of intra-sample distances from CVF method ($\alpha = 0.1$) compared with inter-sample distances of the data and the CVF method (90th percentile)}
  \label{fig:VAR}
\end{figure}
In each case the model is able to simulate point clouds that have similar variability with the data as the data does with itself. This shows that our velocity field is not simply collapsing to a subspace, but, in fact, sampling close to the true conditional distribution at the 90th quantile. In addition, the velocity field is sampling a similar distributional spread between each of its samples.

\end{document}